\def\BibTeX{{\rm B\kern-.05em{\sc i\kern-.025em b}\kern-.08em
    T\kern-.1667em\lower.7ex\hbox{E}\kern-.125emX}}
\DeclareRobustCommand\onedot{\futurelet\@let@token\@onedot}
\def\@onedot{\ifx\@let@token.\else.\null\fi\xspace}
\def\AAT{{$P \kern-2.0pt P \kern-1.0pt ^T$ }}
\def\ATA{{$P \kern-1.0pt ^T \kern-2.0pt P$ }}
\def\AATe{{X \kern-2.0pt X \kern-1.0pt ^T }}
\def\ATAe{{X \kern-1.0pt ^T \kern-2.0pt X }}
\def\pdd{{p_{\kern-2pt\lower+1pt\hbox{..}}\kern+1pt }}
\def\pdds{{p_{\kern-2pt\lower+1pt\hbox{..}}^{1/2}\kern+1pt}}
\def\sdd{{s_{\kern-2pt\lower+1pt\hbox{..}}\kern+1pt }}
\def\sdds{{s_{\kern-2pt\lower+1pt\hbox{..}}^{1/2}\kern+1pt}}
\def\pdc#1{{p_{\kern-2pt\lower+1pt\hbox{.}\lower+2pt\hbox{\footnotesize $#1$}\kern+1pt }}}
\def\sdc#1{{s_{\kern-2pt\lower+1pt\hbox{.}\lower+2pt\hbox{\footnotesize $#1$}\kern+1pt }}}
\def\prd#1{{p_{\kern-1pt\lower+2pt\hbox{\footnotesize $#1$}\lower+1pt\hbox{.}\kern+1pt }}}
\def\srd#1{{s_{\kern-1pt\lower+2pt\hbox{\footnotesize $#1$}\lower+1pt\hbox{.}\kern+1pt }}}
\def\vv1k{{{\bf v}^1 \cdots {\bf v}^k}}
\def\xxx1n{{{\bf x}_1 \cdots {\bf x}_n}}
\def\Dm1{{D^{-1}}}
\def\Dm12{{D^{-1/2}}}
\def\Dp12{{D^{1/2}}}
\def\Dxm12{{D_r^{-1/2}}}
\def\Dym12{{D_c^{-1/2}}}
\def\Dxp12{{D_r^{1/2}}}
\def\Dyp12{{D_c^{1/2}}}
\def\YYT{{Y \kern-0.0pt Y \kern-1.0pt ^T }}
\def\YTY{{Y \kern-1.0pt ^T \kern-0.0pt Y }}
\def\YT{{Y \kern-1.0pt ^T}}
\def\XT{{X \kern-1.0pt ^T}}
\def\XTX{{X \kern-1.0pt ^T \kern-2.0pt X }}
\def\FFTD {{F \kern-1.0pt F^T \kern-3.0pt D }}
\def\bbox{{\hfill {$\sqcap \kern-6.0pt \lower+2.4pt\hbox{--} \kern+2.7pt $}}}
\def\squa {{{$\sqcap \kern-6.0pt \lower+2.4pt\hbox{--} \kern+2.7pt $}}}
\def\argmin{\mathop{\rm arg\,min}}
\def\n21{{2,1}}
\def\l2{$\ell_{2}$\xspace}
\def\l1{$\ell_{1}$\xspace}
\def\l21{$\ell_{\n21}$\xspace}
\newcounter{objcounter}
\begin{document}

\title{\Large Multi-Task Learning with Prior Information}
\author{Mengyuan Zhang\thanks{mengyuz@clemson.edu, School of Computing, Clemson University.}
\and Kai Liu\thanks{kail@clemson.edu, School of Computing, Clemson University.}}

\date{}

\maketitle


\fancyfoot[R]{\scriptsize{Copyright \textcopyright\ 2023 by SIAM\\
Unauthorized reproduction of this article is prohibited}}





\begin{abstract} \small\baselineskip=9pt Multi-task learning aims to boost the generalization performance of multiple related tasks simultaneously by leveraging information contained in those tasks. In this paper, we propose a multi-task learning framework, where we utilize prior knowledge about the relations between features. We also impose a penalty on the coefficients changing for each specific feature to ensure related tasks have similar coefficients on common features shared among them. In addition, we capture a common set of features via group sparsity. The objective is formulated as a non-smooth convex optimization problem, which can be solved  with various methods, including gradient descent method with fixed stepsize, iterative shrinkage-thresholding algorithm (ISTA) with back-tracking, and its variation -- fast iterative shrinkage-thresholding algorithm (FISTA). In light of the sub-linear convergence rate of the methods aforementioned, we propose an asymptotically linear convergent algorithm with  theoretical guarantee. Empirical experiments on both regression and classification tasks with real-world datasets demonstrate that our proposed algorithms are capable of improving the generalization performance of multiple related tasks.\end{abstract}

\section{Introduction}

 Over past decades, Multi-Task Learning (MTL)~\cite{caruana1997multitask} has attracted great interest and has been applied successfully in various research areas, including computer vision~\cite{quadrianto2010multitask}, health informatics~\cite{malhotra2022multi}, and natural language processing~\cite{clark2019bam}, etc. MTL aims to boost the generalization performance of multiple related tasks simultaneously by leveraging potentially useful information contained in those related tasks. Generally speaking, the objective of a MTL problem can be formulated as:
\begin{equation}\label{eq:mtl}
    \min_\mathbf{W}\sum_{i = 1}^m\|\mathbf{X}_i \mathbf{w}_i - \mathbf{y}_i\|^2,
\end{equation}
where $\mathbf{W} = [\mathbf{w}_1,  \dots, \mathbf{w}_m]$ is the matrix formed by the weight vectors of $m$ tasks, $\mathbf{X}_i$ denotes the input feature matrix for the $i$-th task, $y_i$ is the corresponding response. Based on the above objective with the goal of minimizing the overall error across all the tasks, previous studies have been done to improve the performance by identifying the intrinsic relationships among tasks~\cite{vithayathil2020survey}, such as a common set of features they share, and some outlier tasks which are in fact not related with other tasks. Existing methods focusing on finding common feature sets can be classified into two major categories: explicit parameter sharing, where all the tasks share some common coefficients explicitly~\cite{ando2005framework}, and implicit structure sharing, which captures the shared structure among tasks implicitly by constraining a common low-rank subspace~\cite{negahban2011estimation}. 
Some studies also perform outlier task detection~\cite{kang2011learning} at the same time.

One key observation which is ignored by previous studies is: the prior knowledge regarding different features relationship is not taken into account, which can play an important role in feature selection. For instance, expertise knowledge may indicate two features have a similar influence on the response, therefore the correspondent coefficients should be close to each other, while two features having opposite influences should have significantly different coefficients. Besides, for common features in related tasks, coefficients for the same feature should not  change dramatically across tasks. Such prior information is reasonable and not difficult to obtain when we deal with real-world data where each feature denotes a certain property of the data and  relationships between different features can be inferred without much cost. 

Another shortcoming of previous MTL studies targeting optimizing Eq.(\ref{eq:mtl}) is the slow convergence rate. In short, gradient descent requires $\mathcal{O}(\frac{1}{\epsilon})$ iterations to achieve $\epsilon$ accuracy, while momentum based methods will not exceed $\mathcal{O}(\frac{1}{\sqrt{\epsilon}})$, which is still sub-linear convergence rate. To accelerate the convergence asymptotically and in light of the objective by adding the regularization terms described above, we propose a novel updating algorithm that enjoys a linear convergence rate
 with $\mathcal{O}(log(\frac{1}{\epsilon}))$ iterations. One can see its advantage over its counterparts when $\epsilon$ becomes sufficiently small in terms of fewer iterations to obtain $\epsilon$ precision.
 
 We explicitly list the main contributions of this paper as:
\begin{itemize}
\item We add a regularization term based on prior information to obtain more accurate coefficients for related tasks. We impose an extra constraint on the coefficients' changing for the same features among related tasks, which will lead the coefficients for the same features to change smoothly across similar tasks.
\item We present three methods to optimize the MTL with prior information objective, including the vanilla gradient descent with a fixed stepsize, the iterative shrinkage thresholding algorithm (ISTA) with modified stepsize searching~\cite{beck2009fast}, and a novel algorithm with linear convergence rate, which is proved to have comparable speed with the fast iterative shrinkage thresholding algorithm (FISTA) with backtracking~\cite{beck2009fast} in experiments. 
We also provide the linear convergence rate proof of our proposed algorithm, validating the superiority of our new algorithm in terms of speed accelerating in the setting of MTL with prior information. 
\item We conduct empirical experiments with real-world data, the experiment results demonstrate the effectiveness of our proposed algorithm.
\end{itemize}

The remaining of this paper is organized as follows: In Section \ref{sec:problem} we introduce the problem formulation of our proposed MTL with prior information. In Section \ref{sec:opt}, we present the three methods to solve the optimization problem we formulated. Detailed proof of the convergence rate of our proposed algorithm in the MTL scenario is provided in Section \ref{sec:conv}. In Section \ref{sec:exp} we demonstrate the experimental results on both regression and classification tasks, showing the good performance of our proposed algorithm in MTL.

Before we start to formulate the problem, we first introduce some notations throughout the paper in advance for clarity and simplicity. Scalars, vectors, and matrices are denoted by lower case letters, bold lower case letters, and bold capital letters, respectively. $x_i$ denotes the $i$-th entry of a vector $\mathbf{x}$, $x_{ij}$ denotes the ($i$, $j$)-th entry of a matrix $\mathbf{X}$. For the $i$-th row in a matrix $\mathbf{X}$, we use $\mathbf{x}^i$, and for the $i$-th column we use $\mathbf{x}_i$.
The $l_{p,q}$-norm of a matrix $\mathbf{X}$ is defined as $\|\mathbf{X}\|_{p,q} = (\sum_i ((\sum_j x_{ij}^p)^{1/p})^q)^{1/q}$. The inner product of matrices $\mathbf{X}$ and $\mathbf{Y}$ is denoted as $\langle\mathbf{X}$, $\mathbf{Y}\rangle$. $\| \cdot \|_F$ represents the Frobenius norm. $\mathbf{P}^k$ denotes the $\mathbf{P}$ matrix we obtain in the $k$-th iteration, $\eta_{Pk}$ denotes the stepsize of $\mathbf{P}$ in the $k$-th iteration, $prox()$ denotes the proximal operator. 
\section{Problem Formulation}\label{sec:problem}

In MTL, we are given $m$ learning tasks associated with the input data $\{\mathbf{X}_1, \dots, \mathbf{X}_m\}$ and the corresponding responses $\{\mathbf{y}_1, \dots, \mathbf{y}_m\}$, where $\mathbf{X}_i \in \mathbb{R}^{n_i \times d}$ with each row as a sample, and $\mathbf{y}_i \in \mathbb{R}^{n_i \times 1}$. $d$ is the number of features in each task, and $n_i$ is the number of samples in the $i_{th}$ task. For each task, we aim to learn a vector of coefficients $\mathbf{p}_i$ such that $\mathbf{y}_i \approx \mathbf{X}_i \mathbf{p}_i$, the matrix $\mathbf{P}$ is formed by the m coefficient vectors as $\mathbf{P} = [\mathbf{p}_1,  \dots, \mathbf{p}_m] \in \mathbb{R}^{d \times m}$.

Assume we have some prior knowledge about the relationships between features, we are able to construct a matrix $\mathbf{D}$ to contain such prior information, in this way the regularization term $\|\mathbf{D} \mathbf{P}\|_F^2$ is able to force that the learned coefficients are in accordance with the given prior information. To give a straightforward illustration of how it works, we provide an example as follows: suppose we have some prior knowledge regarding features, say the $i_{th}$ feature and the $j_{th}$ feature have a similar influence on the response, then accordingly the corresponding coefficients should be close, namely $\|\mathbf{p}^i-\mathbf{p}^j\|$ should be small. In practice, there can be various such feature relationship constraints (say $s$), by following the above we can formulate the constraint as:
\begin{align}
    \begin{split}
   & \sum_{t=1}^{s}\|\mathbf{p}^{i(t)}-\mathbf{p}^{j(t)}\|^2= \\
  & \left\| \underbrace{\begin{bmatrix}
d_{11} & \dots  & d_{1d}\\
\vdots & \ddots & \dots\\
d_{s1} & \dots  & d_{sd}
\end{bmatrix}}_{\mathbf{D}} \cdot 
\begin{bmatrix}
p_{11} & \dots  & p_{1m}\\
\vdots & \ddots & \dots\\
p_{d1} & \dots  & p_{dm}
\end{bmatrix}  \right\|_F^2,
    \end{split}
\end{align}
where $i(t)$ and $j(t)$ denote the indices in $t$-th constraint and each row in $\mathbf{D}$ all elements are 0's except a pair of $\{1,-1\}$ indexed by $i(t)$ and $j(t)$.
Furthermore, related tasks sharing a common set of features should have similar coefficients for each feature, thus we can integrate the term $\|\mathbf{p}_i - \mathbf{p}_{i+1}\|^2$ in the objective to ensure the smoothness of  coefficients' changing between two adjacent tasks (such as a temporal task). 

Combined together, our multi-task learning with prior information is formulated as follows:
\begin{align}
\begin{split}
        \min_{\mathbf{P}} & \frac{1}{2}\sum_{i = 1}^m\|\mathbf{X}_i \mathbf{p}_i - \mathbf{y}_i\|^2 + \lambda \|\mathbf{P}\|_{2,1} \\
  &   + \frac{1}{2} \theta \|\mathbf{D} \mathbf{P}\|_F^2
    + \frac{1}{2}\epsilon \sum_{i=1}^{m-1} \|\mathbf{p}_i - \mathbf{p}_{i+1}\|^2,
\label{eq:obj}
\end{split}
\end{align}
where all the parameters $\lambda, \theta, \epsilon$ are nonnegative. Specifically, the first term is the empirical loss, while the following $l_{2,1}$-norm regularization term is based on the group Lasso penalty~\cite{liu2019learning,liu2019visual}, which is applied to the rows of $\mathbf{P}$ to identify a common set of features~\footnote{One can also change the group Lasso to Nuclear norm ($\|\mathbf{P}\|_*=\sum_i\sigma_i(\mathbf{P})$) to obtain the low-rank property of $\mathbf{P}$, the whole general process remains the same except the proximal solution changing to $SVT$ (Singular Value Thresholding) in Eq.~(\ref{eq:p}).}. The last two regularization terms are aiming to obtain a $\mathbf{P}$ that is consistent with given prior information. From the problem formulation, it's easy to see we have the beneficial fact that the smooth part in the objective function is strongly-convex.

\section{Optimization Algorithm}\label{sec:opt}
In this section, we will show how to solve the problem formulated in Eq.(\ref{eq:obj}) with multiple methods.
Obviously, it is a nonsmooth convex problem due to the existence of the group Lasso regularization term. To handle this, we can decompose Eq.(\ref{eq:obj}) into two parts: 
\begin{align}
\begin{split}
    f(\mathbf{P}) & = \frac{1}{2}\sum_{i = 1}^m\|\mathbf{X}_i \mathbf{p}_i  - \mathbf{y}_i\|^2  
     + \frac{1}{2} \theta \|\mathbf{D} \mathbf{P}\|_F^2 \\
    & + \frac{1}{2}\epsilon \sum_{i=1}^{m-1} \|\mathbf{p}_i - \mathbf{p}_{i+1}\|^2,
\end{split}
\label{eq:diff}
\end{align}
\begin{equation}
\begin{aligned}
    g(\mathbf{P}) = \lambda \|\mathbf{P}\|_{2,1},
    \label{eq:nondiff}
\end{aligned}
\end{equation}

thus
\begin{equation}
    F(\mathbf{P}) = f(\mathbf{P}) + g(\mathbf{P}),
    \label{eq:combined}
\end{equation}
where $f(\mathbf{P})$ is a smooth differential strongly-convex function, $g(\mathbf{P})$ is a nondifferential convex function. Without loss of generality, let $s_{min}^i, s_{max}^i, d_{min}, d_{max}$ denote the minimum and maximum singular value of $\mathbf{X}_i^T\mathbf{X}_i$ and $\mathbf{D}^T\mathbf{D}$, respectively, the Lipschitz constant $L_P$ of $f(\mathbf{P})$ can be calculated as $max_i(s_{max}^i) + \theta \cdot d_{max} + 2\epsilon$, and the strongly-convex constant $\sigma_P$ can be calculated as $min_i(s_{min}^i) + \theta \cdot d_{min} + \epsilon$.

Following the basic approximation model in~\cite{zhang2022rethinking}, given the Taylor expansion of $f(\mathbf{P})$ at $(\mathbf{A})$ is 
\begin{equation}
\begin{aligned}
    T_{\mathbf{A}, \eta_P}(\mathbf{P}) = f(\mathbf{A}) + \langle \nabla f(\mathbf{A}), \mathbf{P} - \mathbf{A} \rangle + \frac{\eta_P}{2}\|\mathbf{P} - \mathbf{A}\|_F^2,
\end{aligned}
\label{eq:taylor}
\end{equation}
we can minimize $F(\mathbf{P})$ via minimizing its quadratic approximation $M_{\mathbf{A}, \eta_P}(\mathbf{P})$:
\begin{equation}
\begin{aligned}
    \argmin_{\mathbf{P}} M_{\mathbf{A}, \eta_P}(\mathbf{P}) = \argmin_{\mathbf{P}} T_{\mathbf{A}, \eta_P}(\mathbf{P}) + g(\mathbf{P}),
\end{aligned}
\label{eq:qua}
\end{equation}
which admits a unique minimizer for any $\eta_P > 0$. Moreover, as long as $\eta_P \ge L_p$, then $M_{\mathbf{A}, \eta_P}(\mathbf{P})$ is a majorization function w.r.t $F(\mathbf{P})$, therefore we can leverage majorize-minimization (\textit{\textbf{MM}}) to optimize $\mathbf{P}$.

Therefore we can get the following optimization problem:
\begin{equation}
\begin{aligned}
     \mathbf{P} & = \argmin_\mathbf{P} f(\mathbf{A}) + \langle \nabla f(\mathbf{A}), \mathbf{P} - \mathbf{A} \rangle  
      + \frac{\eta_P}{2}\|\mathbf{P} - \mathbf{A}\|_F^2 + \lambda \|\mathbf{P}\|_{2,1}\\
      & = \argmin_\mathbf{P} \frac{\eta_P}{2}\|\mathbf{P} - (\mathbf{A} - \frac{1}{\eta_P}\nabla f(\mathbf{A}))\|_F^2 + \lambda \|\mathbf{P}\|_{2,1},
\end{aligned}
\label{eq:p}
\end{equation}
which leads to a closed proximal operator of rows in $\mathbf{P}$ with the following closed-form solution~\cite{liu2019spherical}:
\begin{equation}
\begin{aligned}
    & \mathbf{U} = \mathbf{A} - \frac{1}{\eta_P} \nabla f(\mathbf{A}), \\
    & \mathbf{p}^i = prox_{\eta_P}(\mathbf{u^i}) = \max (0, 1- \frac{\lambda}{\eta_P \|\mathbf{u}^i\|})\mathbf{u}^i.
\end{aligned}
\label{eq:pi}
\end{equation}

We propose three methods to solve the problem above, including the vanilla gradient descent method with constant stepsize, ISTA with modified stepsize searching, and an algorithm proposed by us with a linear convergence rate. 

In vanilla gradient descent with constant stepsize, the optimal solution at the $k$-th iteration is obtained by solving the following problem
\begin{equation}
\begin{aligned}
    \mathbf{P}^k = \argmin_\mathbf{P} M_{\mathbf{P}^{k-1}, \eta_P}(\mathbf{P})
\end{aligned}
\label{eq:vanilla}
\end{equation}
with an appropriate stepsize $1/\eta_P$. We can verify that the objective has a sufficient decrease when we set $\eta_P$ as the Lipschitz constant $L_P$ of $f(\mathbf{P})$ with regards to $\mathbf{P}$. The algorithm is summarized in Algorithm \ref{alg:1}. 
\begin{algorithm}[tb]
	\caption{Vanilla Gradient Descent Method with Constant Stepsize}
	\label{alg:1}
	\begin{algorithmic}
		\STATE {\bfseries Input:} $\eta_P = L_P$.
		\STATE {\bfseries Initialization:} $\mathbf{P}^0$
		\REPEAT
		\STATE \textbf{1}. Set $\mathbf{A} = \mathbf{P}^{k-1}$. 
		\STATE \textbf{2}. Calculate $\mathbf{P}^k$ according to Eq.(\ref{eq:pi}).
		\UNTIL{convergence}
	\end{algorithmic}
\end{algorithm}

While it is guaranteed that the objective in Eq.(\ref{eq:obj}) is monotonically non-increasing with vanilla gradient descent, an obvious drawback of using $1/L_P$ as the constant stepsize is it is too small to achieve an optimal result rapidly. To improve it, we can apply ISTA with modified stepsize searching. In the previous work about ISTA with backtracking, $\eta_{P0}> 0$ and $\beta_P > 1$ are initialized randomly and we need to find the smallest non-negative integer $i_k$ such that with $\eta_{P} = \beta_P^{i_k}\eta_{P^{k-1}}$ we have 
\begin{equation}
\begin{aligned}
    F(prox_{\eta_P}(\mathbf{P}^{k-1}))
    \leq M_{\mathbf{P^{k-1}}, \eta_P}(prox_{\eta_P}(\mathbf{P}^{k-1})).
\end{aligned}
\label{eq:step}
\end{equation}
One potential flaw in the ISTA with the conventional backtracking method described above lies in the initialization of $\eta_0$. It is possible that the value of $\eta_0$ at the very first step is already larger than the actual Lipschitz constant $L$, and the starting step size is already too small to have fast convergence. And the stepsize $\eta_k$ searching in the $k_{th}$ iteration always starts from $\eta_{k-1}$, thus there may be a larger stepsize available satisfying the condition Eq.(\ref{eq:step}) in the $k_{th}$ iteration that cannot be discovered by this searching process. For this reason, we do the stepsize searching reversely starting from setting $\eta_0$ to its Lipschitz constant $L$, then we keep shrinking it until Eq.(\ref{eq:step}) is not satisfied in terms of $\mathbf{P}$ in the optimization process, which is 
\begin{equation}
\begin{aligned}
    F(prox_{\eta_P}(\mathbf{P}^{k-1})) > M_{\mathbf{P^{k-1}}, \eta_P}(prox_{\eta_P}(\mathbf{P}^{k-1})),
\end{aligned}
\label{eq:istap}
\end{equation}
in this way, we are able to find the largest stepsize meeting the condition in each iteration.
By updating as Algorithm \ref{alg:2}, the objective is decreasing much faster than vanilla gradient descent with constant step size. 
\begin{algorithm}[tb]
	\caption{ISTA with Modified stepsize Searching}
	\label{alg:2}
	\begin{algorithmic}
		\STATE {\bfseries Input:} $\eta_{P0} = L_P, 0 < \beta_P < 1$.
		\STATE {\bfseries Initialization:} $\mathbf{P}^0$
		\REPEAT
		\STATE \textbf{1}. Find the smallest integer $i_k$ such that 
		with $\eta_{P} = \beta_P^{i_k}\eta_{P0}$ 
		Eq. (\ref{eq:istap}) is satisfied.
    Set $\eta_{Pk} = \eta_{P} / \beta_P$.
    \STATE \textbf{3}. With $\mathbf{A} = \mathbf{P}^{k-1}, \eta_P = \eta_{Pk}$,
		calculate $\mathbf{P}^k$ according to Eq.(\ref{eq:pi}).
		\UNTIL{convergence}
	\end{algorithmic}
\end{algorithm}

While ISTA converges faster than vanilla gradient descent with constant stepsize, the convergence rate is still sub-linear (including FISTA), therefore we propose a new algorithm to solve the multi-task learning problem in Eq.(\ref{eq:p}) with a linear convergence rate, utilizing the strongly-convex property of $f(\mathbf{P})$. As we said before, all the parameters $\lambda, \theta, \epsilon$ are nonnegative, thus we are able to guarantee that $f(\mathbf{P})$ in Eq.(\ref{eq:diff}) is strongly-convex with $\sigma_P$, and Lipschitz smooth with $L_P$.
The algorithm is summarized as Algorithm \ref{alg:3}.
\begin{algorithm}[tb]
	\caption{Fast Algorithm with Linear Convergence rate}
	\label{alg:3}
	\begin{algorithmic}
		\STATE {\bfseries Input:} $\eta_P = L_P, c = \frac{L_P}{\sigma_P}$.
		\STATE {\bfseries Initialization:} $\mathbf{P}^0$, set $\mathbf{A}^0 = \mathbf{P}^0$
		\REPEAT
		\STATE \textbf{1}. 
		calculate $\mathbf{P}^k$ according to Eq.(\ref{eq:pi}).
	\STATE \textbf{2}. Update $\mathbf{A}^{k} = \mathbf{P}^k 
	+ \frac{\sqrt{c} - 1}{\sqrt{c}+1}(\mathbf{P}^k - \mathbf{P}^{k-1})$
		\UNTIL{convergence}
	\end{algorithmic}
\end{algorithm}
By updating as Algorithm \ref{alg:3}, although the objective in Eq.(\ref{eq:obj}) is not guaranteed to be monotonically non-increasing, in general it can achieve an optimal solution with a higher convergence rate compared with Algorithm \ref{alg:1} and Algorithm \ref{alg:2}. 

Figure \ref{fig:compare} shows the objective versus the number of iterations in five algorithms with a synthetic dataset. We can see that ISTA with our modified stepsize searching converges faster than ISTA with backtracking in~\cite{beck2009fast}, and the new algorithm we proposed generally converges faster than FISTA with backtracking in~\cite{beck2009fast}.
\begin{figure}[h]
  \centering
  \includegraphics[width=0.8\linewidth]{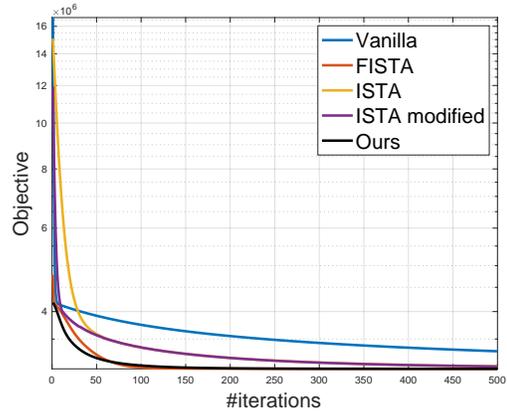}
  \caption{Objective plot in five algorithms.}
  \label{fig:compare}
\end{figure}

\section{Convergence Analysis}\label{sec:conv}
In the previous section, we mentioned Algorithm \ref{alg:2} has a sublinear convergence rate and Algorithm \ref{alg:3} has a linear convergence rate. 
There are other studies showing algorithms with a linear convergence rate for solving such problem~\cite{zhang2013linear}, but different from these studies, there is no strong assumption required in our algorithm, and we utilize the momentum trick following the Nesterov accelerated gradient, which is proven to be unbeatable in general cases. 

The convergence proof of Algorithm \ref{alg:2} can be easily adapted from the proof in~\cite{beck2009fast} to modified step-size searching and be extended from vector variables to matrix variables due to the equivalence of matrix Frobenius norm and vector Euclidean norm. 
Here we only present the key lemma and theorem of the convergence rate:
\begin{lemma}
If for $\mathbf{P} \in \mathbb{R}^{d \times m}$, 
we have
$F(prox_{\eta_P}(\mathbf{P}))
     \leq M_{\mathbf{P},\eta_P}(prox_{\eta_P}(\mathbf{P}))$,
    then for any $\mathbf{A} \in \mathbb{R}^{d \times m}$,
  $F(\mathbf{A}) - F(prox_{\eta_P}(\mathbf{P}))
     \geq \frac{\eta_P}{2} \|prox_{\eta_P}(\mathbf{P}) - \mathbf{P}\|_F^2
    + \eta_P \langle \mathbf{P} - \mathbf{A}, prox_{\eta_P}(\mathbf{P}) - \mathbf{P}\rangle
    $.
\label{lem:1}
\end{lemma}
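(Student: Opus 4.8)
The plan is to adapt the classical argument of Lemma~2.3 in Beck--Teboulle~\cite{beck2009fast} to the matrix setting; the passage from vector to matrix variables is harmless because the Frobenius inner product and norm coincide with the Euclidean inner product and norm on the vectorized variables. Throughout write $\mathbf{Q} = prox_{\eta_P}(\mathbf{P})$ for brevity. Recalling the definitions in Eq.(\ref{eq:taylor})--(\ref{eq:qua}), the hypothesis reads $F(\mathbf{Q}) \le M_{\mathbf{P},\eta_P}(\mathbf{Q}) = f(\mathbf{P}) + \langle \nabla f(\mathbf{P}), \mathbf{Q}-\mathbf{P}\rangle + \frac{\eta_P}{2}\|\mathbf{Q}-\mathbf{P}\|_F^2 + g(\mathbf{Q})$, so it suffices to bound $F(\mathbf{A}) - M_{\mathbf{P},\eta_P}(\mathbf{Q})$ from below by the claimed right-hand side.

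First I would lower-bound $F(\mathbf{A}) = f(\mathbf{A}) + g(\mathbf{A})$ using convexity alone (the strong convexity of $f$ is not needed here): the gradient inequality gives $f(\mathbf{A}) \ge f(\mathbf{P}) + \langle \nabla f(\mathbf{P}), \mathbf{A}-\mathbf{P}\rangle$, and for any subgradient $\boldsymbol\gamma \in \partial g(\mathbf{Q})$ the subgradient inequality gives $g(\mathbf{A}) \ge g(\mathbf{Q}) + \langle \boldsymbol\gamma, \mathbf{A}-\mathbf{Q}\rangle$.

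The decisive step is to select $\boldsymbol\gamma$ from the first-order optimality condition of the proximal subproblem in Eq.(\ref{eq:p}). Since $\mathbf{Q}$ minimizes the convex function $M_{\mathbf{P},\eta_P}$ --- a smooth term plus the convex $g$ --- Fermat's rule yields $0 \in \nabla f(\mathbf{P}) + \eta_P(\mathbf{Q}-\mathbf{P}) + \partial g(\mathbf{Q})$, so $\boldsymbol\gamma := -\nabla f(\mathbf{P}) - \eta_P(\mathbf{Q}-\mathbf{P})$ is an admissible subgradient. Substituting the two lower bounds into $F(\mathbf{A})$, subtracting the expression for $M_{\mathbf{P},\eta_P}(\mathbf{Q})$, and inserting this $\boldsymbol\gamma$, the $f(\mathbf{P})$ and $g(\mathbf{Q})$ terms cancel while the two $\nabla f(\mathbf{P})$ contributions collapse, leaving $F(\mathbf{A}) - F(\mathbf{Q}) \ge -\eta_P\langle \mathbf{Q}-\mathbf{P}, \mathbf{A}-\mathbf{Q}\rangle - \frac{\eta_P}{2}\|\mathbf{Q}-\mathbf{P}\|_F^2$.

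The final step is purely algebraic: writing $\mathbf{A}-\mathbf{Q} = (\mathbf{A}-\mathbf{P}) - (\mathbf{Q}-\mathbf{P})$ and expanding the inner product converts the right-hand side into $\frac{\eta_P}{2}\|\mathbf{Q}-\mathbf{P}\|_F^2 + \eta_P\langle \mathbf{P}-\mathbf{A}, \mathbf{Q}-\mathbf{P}\rangle$, which is exactly the stated bound. I expect the only real obstacle to be the subgradient bookkeeping in the decisive step --- correctly extracting $\boldsymbol\gamma$ from the optimality condition at the non-smooth point $\mathbf{Q}$ and keeping track of the sign conventions in the inner products; everything after that is routine expansion.
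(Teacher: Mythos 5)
Your argument is correct, and it is essentially the classical Beck--Teboulle Lemma~2.3 proof that the paper itself invokes rather than reproduces: the paper offers no proof of Lemma~\ref{lem:1}, stating only that it ``can be easily adapted'' from \cite{beck2009fast} to the matrix setting. Your subgradient bookkeeping checks out: Fermat's rule on Eq.~(\ref{eq:p}) gives $0 \in \nabla f(\mathbf{P}) + \eta_P(\mathbf{Q}-\mathbf{P}) + \partial g(\mathbf{Q})$ with $\mathbf{Q}=prox_{\eta_P}(\mathbf{P})$, and substituting $\boldsymbol\gamma = -\nabla f(\mathbf{P}) - \eta_P(\mathbf{Q}-\mathbf{P})$ into the two convexity lower bounds and subtracting $M_{\mathbf{P},\eta_P}(\mathbf{Q})$ does leave exactly $-\eta_P\langle \mathbf{Q}-\mathbf{P}, \mathbf{A}-\mathbf{Q}\rangle - \frac{\eta_P}{2}\|\mathbf{Q}-\mathbf{P}\|_F^2$, which rearranges to the stated bound. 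The one point of comparison worth noting is that the paper \emph{does} prove the strengthening of this lemma (Lemma~\ref{lem:2}) by a differently packaged route: it introduces the auxiliary function $\phi(u) = f(y) + \langle\nabla f(y), u-y\rangle + g(u) + \frac{\alpha}{2}\|u-y\|^2$, observes that $\phi$ is $\alpha$-strongly convex with minimizer $prox_\alpha(y)$, and uses $\phi(x)-\phi(prox_\alpha(y)) \ge \frac{\alpha}{2}\|x-prox_\alpha(y)\|^2$ in place of your explicit extraction of a subgradient of $g$. The two are equivalent --- the strong-convexity-of-the-minimizer inequality is just Fermat's rule plus the quadratic term in disguise --- but the paper's packaging avoids naming $\boldsymbol\gamma$ and, more importantly, retains the extra residual $f(x)-f(y)-\langle\nabla f(y),x-y\rangle$ that your proof discards via the convexity inequality on $f$; keeping that residual is precisely what upgrades Lemma~\ref{lem:1} to Lemma~\ref{lem:2} and is later needed for the linear-rate Theorem~\ref{thm:linear}. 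For Lemma~\ref{lem:1} itself, your version is complete and correct.
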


We present the following theorem about the convergence rate of solving Eq.(\ref{eq:obj}) via Algorithm \ref{alg:2}:
\begin{theorem}
Let $\mathbf{P}^k$ be the output generated by Algorithm \ref{alg:2} in the $k-$th iteration, then for any $k \geq 1$ we have
$F(\mathbf{P}^k) - F(\mathbf{P}^*) = O(\frac{1}{k})$,
where $\mathbf{P}^*$ is the optimal solution in Eq.(\ref{eq:obj}).
\label{thm:ista}
\end{theorem}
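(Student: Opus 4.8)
The plan is to follow the classical ISTA analysis specialized to the matrix variable $\mathbf{P}$ and the modified step-size rule of Algorithm \ref{alg:2}, using Lemma \ref{lem:1} as the single workhorse. Throughout I write $\mathbf{P}^k = prox_{\eta_{Pk}}(\mathbf{P}^{k-1})$, which is exactly the update in Eq.(\ref{eq:pi}) with $\mathbf{A} = \mathbf{P}^{k-1}$, and I note that the step $\eta_{Pk}$ returned by the search satisfies the hypothesis of Lemma \ref{lem:1}, namely $F(prox_{\eta_{Pk}}(\mathbf{P}^{k-1})) \le M_{\mathbf{P}^{k-1},\eta_{Pk}}(prox_{\eta_{Pk}}(\mathbf{P}^{k-1}))$. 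This holds by construction, since $\eta_{Pk}$ is the last value that still satisfied Eq.(\ref{eq:step}) before Eq.(\ref{eq:istap}) was triggered.

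First I would invoke Lemma \ref{lem:1} twice. Taking $\mathbf{A} = \mathbf{P}^{k-1}$ and $\mathbf{P} = \mathbf{P}^{k-1}$ gives $F(\mathbf{P}^{k-1}) - F(\mathbf{P}^k) \ge \frac{\eta_{Pk}}{2}\|\mathbf{P}^k - \mathbf{P}^{k-1}\|_F^2 \ge 0$, so the values $F(\mathbf{P}^k)$ are non-increasing. Taking instead $\mathbf{A} = \mathbf{P}^*$ and $\mathbf{P} = \mathbf{P}^{k-1}$ yields
\[
F(\mathbf{P}^*) - F(\mathbf{P}^k) \ge \frac{\eta_{Pk}}{2}\|\mathbf{P}^k - \mathbf{P}^{k-1}\|_F^2 + \eta_{Pk}\langle \mathbf{P}^{k-1} - \mathbf{P}^*, \mathbf{P}^k - \mathbf{P}^{k-1}\rangle .
\]
Multiplying by $2/\eta_{Pk}$ and applying the elementary identity $\|\mathbf{b}-\mathbf{a}\|_F^2 + 2\langle \mathbf{a}-\mathbf{c}, \mathbf{b}-\mathbf{a}\rangle = \|\mathbf{b}-\mathbf{c}\|_F^2 - \|\mathbf{a}-\mathbf{c}\|_F^2$ with $\mathbf{a}=\mathbf{P}^{k-1}$, $\mathbf{b}=\mathbf{P}^k$, $\mathbf{c}=\mathbf{P}^*$ converts the right-hand side into a telescoping difference, giving $\frac{2}{\eta_{Pk}}(F(\mathbf{P}^k)-F(\mathbf{P}^*)) \le \|\mathbf{P}^{k-1}-\mathbf{P}^*\|_F^2 - \|\mathbf{P}^k-\mathbf{P}^*\|_F^2$.

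Next I would sum this inequality over $k = 1,\dots,n$. The right side telescopes to $\|\mathbf{P}^0 - \mathbf{P}^*\|_F^2 - \|\mathbf{P}^n - \mathbf{P}^*\|_F^2 \le \|\mathbf{P}^0 - \mathbf{P}^*\|_F^2$. For the left side I would use the step-size bound $\eta_{Pk} \le L_P$ to replace $1/\eta_{Pk}$ by $1/L_P$, together with the monotonicity $F(\mathbf{P}^n) \le F(\mathbf{P}^k)$ for $k \le n$ established above. This yields $\frac{2n}{L_P}\big(F(\mathbf{P}^n) - F(\mathbf{P}^*)\big) \le \|\mathbf{P}^0 - \mathbf{P}^*\|_F^2$, hence $F(\mathbf{P}^n) - F(\mathbf{P}^*) \le \frac{L_P\|\mathbf{P}^0-\mathbf{P}^*\|_F^2}{2n} = O(1/n)$, which is the claim.

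The two applications of Lemma \ref{lem:1} and the telescoping identity are routine; they carry over verbatim from the vector case because the Frobenius inner product and norm obey the same algebra as their Euclidean counterparts. The step I expect to require the most care is justifying the uniform bound $\eta_{Pk} \le L_P$ for the reversed search of Algorithm \ref{alg:2}: I must confirm that the search, which starts at $\eta_{P0}=L_P$ and only shrinks, returns an accepted $\eta_{Pk}$ that both satisfies the majorization hypothesis of Lemma \ref{lem:1} and never exceeds $L_P$, since it is precisely this bound that converts $\sum_k 1/\eta_{Pk}$ into the factor $n/L_P$ driving the $O(1/n)$ rate. Establishing the monotonicity of $F(\mathbf{P}^k)$ is equally essential, as it is what lets the final iterate inherit the averaged bound obtained from the sum.
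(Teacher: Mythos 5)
Your argument is correct and is essentially the proof the paper intends: the paper gives no details beyond Lemma \ref{lem:1} and a pointer to the classical ISTA analysis of Beck and Teboulle, and your two invocations of Lemma \ref{lem:1} (once with $\mathbf{A}=\mathbf{P}=\mathbf{P}^{k-1}$ for monotonicity, once with $\mathbf{A}=\mathbf{P}^*$), followed by the three-point identity, telescoping, and the bound $\eta_{Pk}\le L_P$, reproduce exactly that adaptation. Your flagged concern about the reversed search is resolved as you suspect: since $\eta=L_P$ always satisfies the majorization condition, the search index $i_k$ is at least $1$, so the accepted step $\eta_{Pk}=\beta_P^{i_k-1}L_P$ both satisfies the hypothesis of Lemma \ref{lem:1} and never exceeds $L_P$.
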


Before diving into the detailed proof of convergence rate in Algorithm \ref{alg:3}, we first provide two useful lemmas that are important for the following proof process:

\begin{lemma}
For $F(x) = f(x) + g(x)$, if $g(x)$ is convex, and $f(x)$ is $\sigma-$strongly convex and $L-$smooth, then for any $x, y$ and $\alpha>0$ satisfying 
\begin{align}
\begin{split}
     & f(prox_{\alpha}(y))\\
     \leq 
     & f(y) + \langle \nabla f(y), prox_{\alpha}(y) - y \rangle + \frac{\alpha}{2}\|prox_{\alpha}(y) - y\|^2
\end{split}
\label{eq:lem21}
\end{align}
the following inequality holds:
\begin{align}
\begin{split}
     & F(x) - F(prox_{\alpha}(y)) \\
     \geq 
     & \frac{\alpha}{2}\|x-prox_{\alpha}(y)\|^2
     - \frac{\alpha}{2}\|x - y\|^2 \\
   &  +f(x) - f(y) - \langle \nabla f(y), x - y \rangle.
\end{split}
\label{eq:lem22}
\end{align}
\label{lem:2}
\end{lemma}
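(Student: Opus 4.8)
The plan is to combine the variational characterization of $z := prox_\alpha(y)$ with the descent hypothesis (\ref{eq:lem21}), and — crucially — to \emph{avoid} invoking any convexity of $f$ along the way, since it is precisely this omission that produces the extra Bregman-type remainder on the right-hand side of (\ref{eq:lem22}). Recall from (\ref{eq:p})--(\ref{eq:pi}) that $z$ is the unique minimizer of $\langle \nabla f(y), P - y\rangle + \frac{\alpha}{2}\|P - y\|^2 + g(P)$, so its first-order optimality condition reads $\nabla f(y) + \alpha(z - y) + \gamma = 0$ for some $\gamma \in \partial g(z)$, i.e. $\gamma = -\nabla f(y) - \alpha(z - y)$. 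First I would record this, then feed $\gamma$ into the subgradient inequality for the convex function $g$, namely $g(x) \geq g(z) + \langle \gamma, x - z\rangle$, obtaining the lower bound $g(x) - g(z) \geq -\langle \nabla f(y), x - z\rangle - \alpha\langle z - y, x - z\rangle$.

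Next I would treat the smooth part by writing $F(x) - F(z) = \bigl(f(x) - f(z)\bigr) + \bigl(g(x) - g(z)\bigr)$ and lower-bounding $-f(z)$ directly through hypothesis (\ref{eq:lem21}), which gives $-f(z) \geq -f(y) - \langle \nabla f(y), z - y\rangle - \frac{\alpha}{2}\|z - y\|^2$. The essential point is to leave the term $f(x)$ entirely untouched. This is exactly where the argument departs from the standard ISTA estimate of Lemma~\ref{lem:1}: there one additionally applies $f(x) \geq f(y) + \langle \nabla f(y), x - y\rangle$, which cancels the smooth contribution; by declining to do so here, the combination $f(x) - f(y) - \langle \nabla f(y), x - y\rangle$ survives and reappears verbatim in (\ref{eq:lem22}), and it is this surviving quantity that later lets the strong-convexity constant $\sigma$ enter the linear-rate analysis of Algorithm~\ref{alg:3}.

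Finally I would add the two lower bounds and simplify. The three inner products against $\nabla f(y)$ telescope via $-\langle \nabla f(y), z - y\rangle - \langle \nabla f(y), x - z\rangle = -\langle \nabla f(y), x - y\rangle$, while the residual quadratic terms $-\alpha\langle z - y, x - z\rangle - \frac{\alpha}{2}\|z - y\|^2$ collapse, using $x - z = (x - y) - (z - y)$ together with the cosine-rule identity $\frac{\alpha}{2}\|x - z\|^2 - \frac{\alpha}{2}\|x - y\|^2 = -\alpha\langle z - y, x - y\rangle + \frac{\alpha}{2}\|z - y\|^2$, into $\frac{\alpha}{2}\|x - z\|^2 - \frac{\alpha}{2}\|x - y\|^2$. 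Assembling the surviving $f(x) - f(y) - \langle \nabla f(y), x - y\rangle$ with this quadratic remainder reproduces (\ref{eq:lem22}) exactly.

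The main obstacle is organizational rather than conceptual: one must track the several $\nabla f(y)$ inner products so that they telescope cleanly, and apply the completing-the-square identity in the correct direction so the quadratic residual lands as $+\frac{\alpha}{2}\|x - z\|^2 - \frac{\alpha}{2}\|x - y\|^2$ rather than its negative. I would also flag a subtle observation: as stated, the inequality consumes only the convexity of $g$ (for the optimality condition and the subgradient step) and the descent hypothesis (\ref{eq:lem21}); the $\sigma$-strong convexity and $L$-smoothness of $f$ listed in the premises are not actually used in this lemma but are carried along for the downstream convergence proof.
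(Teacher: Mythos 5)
Your proof is correct and every step checks out: the first-order optimality condition for $z = prox_{\alpha}(y)$, the subgradient inequality for $g$, the use of hypothesis (\ref{eq:lem21}) to lower-bound $-f(z)$, and the cosine-rule collapse of $-\alpha\langle z-y, x-z\rangle - \frac{\alpha}{2}\|z-y\|^2$ into $\frac{\alpha}{2}\|x-z\|^2 - \frac{\alpha}{2}\|x-y\|^2$ assemble exactly into (\ref{eq:lem22}); your side remark that neither the $\sigma$-strong convexity nor the $L$-smoothness of $f$ is actually consumed in this lemma is also accurate. The paper reaches the same inequality by a slightly more packaged route: it defines $\phi(u) = f(y) + \langle \nabla f(y), u - y\rangle + g(u) + \frac{\alpha}{2}\|u-y\|^2$, observes that $\phi$ is $\alpha$-strongly convex with minimizer $prox_{\alpha}(y)$, invokes the abstract bound $\phi(x) - \phi(prox_{\alpha}(y)) \geq \frac{\alpha}{2}\|x - prox_{\alpha}(y)\|^2$, then applies (\ref{eq:lem21}) in the form $\phi(prox_{\alpha}(y)) \geq F(prox_{\alpha}(y))$ and expands $\phi(x)$. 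The two arguments are equivalent --- your explicit subgradient computation is essentially the proof of the strong-convexity-at-the-minimizer inequality that the paper cites as obvious --- but the paper's version hides the inner-product bookkeeping inside that one fact, whereas yours makes it visible and, as you note, makes it transparent why the Bregman remainder $f(x) - f(y) - \langle \nabla f(y), x - y\rangle$ survives: $f(x)$ is never linearized.
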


\begin{proof}
Consider function $\phi(u) = f(y) + \langle \nabla f(y), u - y \rangle + g(u) + \frac{\alpha}{2}\|u-y\|^2$, it is obvious that such $\phi(u)$ is $\alpha-$strongly convex and $prox_{\alpha}(y) = \argmin_{u}(\phi(u))$.
Then we have
\begin{equation}
\begin{aligned}
    \phi(x) - \phi(prox_{\alpha}(y)) \geq \frac{\alpha}{2}\|x-prox_{\alpha}(y)\|^2.
\end{aligned}
\label{eq:lem23}
\end{equation}
According to Eq.(\ref{eq:lem21}):
\begin{align}
\begin{split}
      & \phi(prox_{\alpha}(y)) \\
      = &  f(y) + \langle \nabla f(y), prox_{\alpha}(y)- y \rangle \\
     & + \frac{\alpha}{2}\|y-prox_{\alpha}(y)\|^2+ g(prox_{\alpha}(y)) \\ 
     \geq & f(prox_{\alpha}(y)) + g(prox_{\alpha}(y))\\
     = & F(prox_{\alpha}(y)),
\label{eq:lem24}
\end{split}
\end{align}
combine with Eq.(\ref{eq:lem23}), we obtain
\begin{equation}
\begin{aligned}
    \phi(x) - F(prox_{\alpha}(y)) \geq \frac{\alpha}{2} \|x - prox_{\alpha}(y)\|^2.
\end{aligned}
\label{eq:lem25}
\end{equation}
Therefore it's easy to get the following inequality after we plug in the formula of $\phi(x)$:
\begin{align}
\begin{split}
     & \phi(x) - F(prox_{\alpha}(y))  \\
 = &   F(x) - f(x) + f(y) 
     + \frac{\alpha}{2}\|x-y\|^2 \\
    & + \langle \nabla f(y), x - y \rangle - F(prox_{\alpha}(y)) \\
 \geq & \frac{\alpha}{2} \|x - prox_{\alpha}(y)\|^2,
\end{split}
\label{eq:lem26}
\end{align}
which is the same as Eq.(\ref{eq:lem22}).
\end{proof}

\begin{lemma}
For any vector $\mathbf{a}, \mathbf{b}$ and constant $\beta < 1$, we have the following equation: 
\begin{equation}
\begin{aligned}
    \|\mathbf{a} + \mathbf{b}\|^2 - \beta\|\mathbf{a}\|^2 = (1-\beta)\|\mathbf{a} + \frac{1}{1-\beta}\mathbf{b}\|^2 - \frac{\beta}{1-\beta} \|\mathbf{b}\|^2.
\end{aligned}
\label{eq:lem31}
\end{equation}
\label{lem:3}
\end{lemma}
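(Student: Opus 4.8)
The plan is to treat Lemma~\ref{lem:3} as a pure algebraic identity and establish it by expanding both sides via the elementary expansion $\|\mathbf{x}+\mathbf{y}\|^2 = \|\mathbf{x}\|^2 + 2\langle\mathbf{x},\mathbf{y}\rangle + \|\mathbf{y}\|^2$, valid in any inner-product space. The hypothesis $\beta < 1$ guarantees $1-\beta > 0$, so every division by $1-\beta$ on the right-hand side is well defined and no case distinction is required.

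First I would expand the left-hand side into a canonical form in the three quantities $\|\mathbf{a}\|^2$, $\langle\mathbf{a},\mathbf{b}\rangle$, and $\|\mathbf{b}\|^2$:
\begin{equation}
\begin{aligned}
\|\mathbf{a}+\mathbf{b}\|^2 - \beta\|\mathbf{a}\|^2
&= \|\mathbf{a}\|^2 + 2\langle\mathbf{a},\mathbf{b}\rangle + \|\mathbf{b}\|^2 - \beta\|\mathbf{a}\|^2 \\
&= (1-\beta)\|\mathbf{a}\|^2 + 2\langle\mathbf{a},\mathbf{b}\rangle + \|\mathbf{b}\|^2.
\end{aligned}
\end{equation}
This is the target form that the right-hand side must reduce to.

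Next I would expand the quadratic term $\|\mathbf{a}+\frac{1}{1-\beta}\mathbf{b}\|^2$, multiply it by the prefactor $(1-\beta)$, and subtract $\frac{\beta}{1-\beta}\|\mathbf{b}\|^2$. The $\|\mathbf{a}\|^2$ contribution is immediately $(1-\beta)\|\mathbf{a}\|^2$ and the cross term collapses to $2\langle\mathbf{a},\mathbf{b}\rangle$, both matching the left-hand side at once. The only bookkeeping that warrants attention --- and the nearest thing to an obstacle in an otherwise mechanical proof --- is the coefficient of $\|\mathbf{b}\|^2$: the factor $(1-\beta)\cdot\frac{1}{(1-\beta)^2} = \frac{1}{1-\beta}$ produced by the expansion must be combined with the subtracted $-\frac{\beta}{1-\beta}$ to give $\frac{1-\beta}{1-\beta} = 1$, matching the unit coefficient on the left. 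Collecting the three reconciled terms finishes the argument.

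Finally I would remark that although the lemma is stated for vectors, it transfers verbatim to matrix arguments under the Frobenius norm, since $\|\cdot\|_F$ is the Euclidean norm of the vectorized matrix; this is precisely the regime in which the identity is invoked for the momentum update $\mathbf{A}^k = \mathbf{P}^k + \frac{\sqrt{c}-1}{\sqrt{c}+1}(\mathbf{P}^k - \mathbf{P}^{k-1})$ of Algorithm~\ref{alg:3}.
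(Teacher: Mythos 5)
Your proof is correct: the direct expansion of both sides into $\|\mathbf{a}\|^2$, $\langle\mathbf{a},\mathbf{b}\rangle$, and $\|\mathbf{b}\|^2$ terms verifies the identity, and your bookkeeping of the $\|\mathbf{b}\|^2$ coefficient, $\frac{1}{1-\beta}-\frac{\beta}{1-\beta}=1$, is exactly right. The paper states Lemma~\ref{lem:3} without proof, so there is nothing to diverge from; your mechanical expansion is the intended elementary argument, and your closing remark that the identity transfers to matrices under the Frobenius norm correctly anticipates how it is used in Eq.~(\ref{eq:t6}).
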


Here we introduce the theorem about the convergence rate of Algorithm \ref{alg:3}:
\begin{theorem}
For $F(x) = f(x) + g(x)$ in Eq.(\ref{eq:combined}), $g(x)$ is convex, and $f(x)$ is $\sigma-$strongly convex and $L-$smooth, let $c = \frac{L}{\sigma}$ and $t = \sqrt{c}$. Let $\mathbf{P}^k$ be the $k_{th}$ iteration's output in Algorithm \ref{alg:3}, $\mathbf{P}^*$ be the optimal solution,
$V_k = F(\mathbf{P}^k) - F(\mathbf{P}^*)$. Then for any $k \geq 1$ we have
$V_k \leq (1- \frac{1}{t})^k (V_0 + \frac{\sigma}{2}\|\mathbf{P}^0 - \mathbf{P}^*\|^2)$.
\label{thm:linear}
\end{theorem}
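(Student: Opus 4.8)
The plan is to establish the linear rate through a Lyapunov (potential) function that contracts by the factor $1-\tfrac1t$ at every iteration. Concretely, I would introduce an auxiliary sequence $\mathbf{Z}^k$ and the potential $\Phi_k = V_k + \tfrac{\sigma}{2}\|\mathbf{Z}^k-\mathbf{P}^*\|^2$, arrange that $\mathbf{Z}^0=\mathbf{P}^0$ (so that the stated right-hand side is exactly $\Phi_0$, using $\mathbf{A}^0=\mathbf{P}^0$), and reduce the theorem to the one-step contraction $\Phi_k \le \bigl(1-\tfrac1t\bigr)\Phi_{k-1}$. Telescoping this inequality and using $V_k\le\Phi_k$ then gives $V_k\le (1-\tfrac1t)^k\Phi_0$, which is the claim.

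To obtain the one-step inequality I would invoke Lemma~\ref{lem:2} with $y=\mathbf{A}^{k-1}$ and $\alpha=\eta_P=L$, so that $prox_\alpha(y)=\mathbf{P}^k$; the hypothesis \eqref{eq:lem21} holds because $\eta_P$ equals the Lipschitz constant and $f$ is $L$-smooth. Applying the lemma once with $x=\mathbf{P}^{k-1}$ and once with $x=\mathbf{P}^*$, and replacing the term $f(x)-f(y)-\langle\nabla f(y),x-y\rangle$ by its strong-convexity lower bound $\tfrac{\sigma}{2}\|x-y\|^2$, yields the two inequalities
\begin{align*}
V_{k-1}-V_k &\ge \tfrac{L}{2}\|\mathbf{P}^{k-1}-\mathbf{P}^k\|^2 - \tfrac{L-\sigma}{2}\|\mathbf{P}^{k-1}-\mathbf{A}^{k-1}\|^2,\\
-V_k &\ge \tfrac{L}{2}\|\mathbf{P}^*-\mathbf{P}^k\|^2 - \tfrac{L-\sigma}{2}\|\mathbf{P}^*-\mathbf{A}^{k-1}\|^2.
\end{align*}
I would then take their convex combination with weights $1-\tfrac1t$ and $\tfrac1t$; the left-hand sides collapse to $(1-\tfrac1t)V_{k-1}-V_k$, producing a recursion of the form $V_k\le (1-\tfrac1t)V_{k-1}+(\text{squared-norm residuals})$.

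The heart of the argument, and the step I expect to be the main obstacle, is showing that these residual squared-norm terms are exactly absorbed by the distance part of the potential. Here I would use the momentum update to write $\mathbf{A}^{k-1}-\mathbf{P}^{k-1}=\gamma(\mathbf{P}^{k-1}-\mathbf{P}^{k-2})$ with $\gamma=\tfrac{t-1}{t+1}$, define $\mathbf{Z}^k$ as the affine combination of $\mathbf{P}^k$ and $\mathbf{P}^{k-1}$ for which $\mathbf{A}^{k-1}$ lies on the segment between $\mathbf{P}^{k-1}$ and $\mathbf{Z}^{k-1}$, and then apply Lemma~\ref{lem:3} to complete the square. The identity \eqref{eq:lem31} is precisely the tool that merges the term $\tfrac{L}{2t}\|\mathbf{P}^*-\mathbf{P}^k\|^2$ with the momentum terms into $\tfrac{\sigma}{2}\|\mathbf{Z}^k-\mathbf{P}^*\|^2$ on one side and $(1-\tfrac1t)\tfrac{\sigma}{2}\|\mathbf{Z}^{k-1}-\mathbf{P}^*\|^2$ on the other, with the cross terms cancelling and the term $\tfrac{L}{2}\|\mathbf{P}^{k-1}-\mathbf{P}^k\|^2$ controlling the momentum energy. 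The delicate point is that this cancellation works only for the specific momentum coefficient $\gamma=\tfrac{t-1}{t+1}$ and weight $\tfrac1t=\sqrt{\sigma/L}$; verifying that the coefficients line up (using the key relation $L\cdot\tfrac1t=\sigma\cdot t=\sqrt{L\sigma}$) is the calculation that must be carried out carefully.

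Finally I would check the base case: since $\mathbf{A}^0=\mathbf{P}^0$ and the convention $\mathbf{P}^{-1}=\mathbf{P}^0$ makes the initial momentum term vanish, one has $\mathbf{Z}^0=\mathbf{P}^0$ and $\Phi_0=V_0+\tfrac{\sigma}{2}\|\mathbf{P}^0-\mathbf{P}^*\|^2$. Telescoping $\Phi_k\le(1-\tfrac1t)\Phi_{k-1}$ down to $k=0$ and discarding the nonnegative distance term on the left gives $V_k\le\Phi_k\le(1-\tfrac1t)^k\Phi_0$, which is exactly the stated bound.
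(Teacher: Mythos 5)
Your proposal is correct and follows essentially the same route as the paper's proof: the same Lyapunov function $V_k + \frac{\sigma}{2}\|t\mathbf{P}^k - (\mathbf{P}^* + (t-1)\mathbf{P}^{k-1})\|^2$ (your $\mathbf{Z}^k$ is exactly $t\mathbf{P}^k - (t-1)\mathbf{P}^{k-1}$), the same invocation of Lemma~\ref{lem:2} with $\alpha = L$ and $y = \mathbf{A}^{k-1}$ strengthened by the $\sigma$-strong-convexity lower bound, the same completion of squares via Lemma~\ref{lem:3} using $L = \sigma t^2$, and the same telescoping with base case $\mathbf{A}^0 = \mathbf{P}^0$. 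The only structural difference is where the convex combination enters: you apply Lemma~\ref{lem:2} at the two points $x = \mathbf{P}^{k-1}$ and $x = \mathbf{P}^*$ and average the resulting inequalities with weights $1-\frac{1}{t}$ and $\frac{1}{t}$, whereas the paper applies it once at the single point $x = \frac{1}{t}\mathbf{P}^* + (1-\frac{1}{t})\mathbf{P}^k$ and then splits $F$ at that point using strong convexity in Eq.(\ref{eq:t3}). These are algebraically identical: by the identity $\|\lambda a + (1-\lambda)b - z\|^2 = \lambda\|a-z\|^2 + (1-\lambda)\|b-z\|^2 - \lambda(1-\lambda)\|a-b\|^2$, the cross terms $-\lambda(1-\lambda)\|\mathbf{P}^k - \mathbf{P}^*\|^2$ generated by your averaging are precisely the strong-convexity correction the paper extracts in Eq.(\ref{eq:t3}), so both routes arrive at the same one-step contraction Eq.(\ref{eq:t9}). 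You also correctly identify the crux --- that the cancellation in Lemma~\ref{lem:3} only works for the momentum coefficient $\frac{t-1}{t+1}$ because $L/t = \sigma t = \sqrt{L\sigma}$ --- which is exactly the computation carried out in Eqs.(\ref{eq:t6})--(\ref{eq:t8}).
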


\begin{proof}
According to Lemma \ref{lem:2} and the fact that $f(x)$ is $\sigma-$strongly convex and $L-$smooth, we obtain
\begin{align}
\begin{split}
    &  F(x) - F(prox_{L}(y)) \\
   \geq & \frac{L}{2}\|x-prox_{L}(y)\|^2 - \frac{L}{2}\|x - y\|^2
    + \frac{\sigma}{2}\|x - y\|^2.
\end{split}
\label{eq:t1}
\end{align}
Invoking the above inequality with $x = \frac{1}{t}\mathbf{P}^* + (1 - \frac{1}{t})\mathbf{P}^k$ and $y = \mathbf{A}^k$ in Algorithm \ref{alg:3}, we get
\begin{align}
\begin{split}
    & F(\frac{1}{t}\mathbf{P}^* + (1 - \frac{1}{t})\mathbf{P}^k) - F(\mathbf{P}^{k+1}) \\
    \geq & \frac{L}{2}\|\mathbf{P}^{k+1}-(\frac{1}{t}\mathbf{P}^* + (1 - \frac{1}{t})\mathbf{P}^k)\|^2 \\
    & - \frac{L-\sigma}{2}\|\mathbf{A}^k - (\frac{1}{t}\mathbf{P}^* + (1 - \frac{1}{t})\mathbf{P}^k)\|^2 \\
    = & \frac{L}{2t^2}\|t\mathbf{P}^{k+1} - (\mathbf{P}^* + (t-1)\mathbf{P}^k)\|^2 \\
    & - \frac{L-\sigma}{2t^2}\|t\mathbf{A}^k-(\mathbf{P}^*+(t-1)\mathbf{P}^k)\|^2.
\end{split}
\label{eq:t2}
\end{align}
Since $f$ is a $\sigma-$strongly convex function, for $\alpha \in [0,1]$, we have
$f(\alpha x + (1-\alpha)y) \leq \alpha f(x) + (1-\alpha) f(y) - \frac{\sigma\alpha(1-\alpha)}{2}\|x-y\|^2$, and obviously $\frac{1}{t}\in [0,1]$, so we have
\begin{align}
\begin{split}
     & F(\frac{1}{t}\mathbf{P}^* + (1 - \frac{1}{t})\mathbf{P}^k) \\
     \leq &  \frac{1}{t}F(\mathbf{P}^*) + (1-\frac{1}{t})F(\mathbf{P}^k) 
 - \frac{\sigma t^{-1}(1-t^{-1})}{2}\|\mathbf{P}^k - \mathbf{P}^*\|^2.
\end{split}
\label{eq:t3}
\end{align}
With $V_k = F(\mathbf{P}^k) - F(\mathbf{P}^*)$, we can get 
\begin{align}
\begin{split}
    &  F(\frac{1}{t}\mathbf{P}^* + (1 - \frac{1}{t})\mathbf{P}^k)-F(\mathbf{P}^{k+1}) \\
     \leq &  (1-t^{-1})V_k - V_{k+1} - \frac{\sigma t^{-1}(1-t^{-1})}{2}\|\mathbf{P}^k - \mathbf{P}^*\|^2.
\end{split}
\label{eq:t4}
\end{align}
Combine Eq.(\ref{eq:t4}) and Eq.(\ref{eq:t2}),
we have
\begin{equation}
\begin{aligned}
    \frac{L-\sigma}{2}\|t\mathbf{A}^k - (\mathbf{P}^*+(t-1)\mathbf{P}^k)\|^2 - \frac{\sigma (t-1)}{2}\|\mathbf{P}^k - \mathbf{P}^*\|^2 
     \\ \geq t^2 V_{k+1} -t(t-1)V_k + \frac{L}{2}\|t\mathbf{P}^{k+1}- (\mathbf{P}^*+(t-1)\mathbf{P}^k)\|^2.
\end{aligned}
\label{eq:t5}
\end{equation}
With Lemma \ref{lem:3} and set $a:=\mathbf{P}^k - \mathbf{P}^*, b:= t(\mathbf{A}^k-\mathbf{P}^k), \beta:=\frac{\sigma (t-1)}{L-\sigma}$, then for the left side in the above inequality, we have
\begin{equation}
\begin{aligned}
\small
    & \frac{L-\sigma}{2}\|t\mathbf{A}^k - (\mathbf{P}^*+(t-1)\mathbf{P}^k)\|^2 \\
    & - \frac{\sigma (t-1)}{2}\|\mathbf{P}^k - \mathbf{P}^*\|^2 
    \\
     =&  \frac{L-\sigma}{2}\{\|t(\mathbf{A}^k-\mathbf{P}^k)+(\mathbf{P}^k-\mathbf{P}^*)\|^2 \\
     & - \frac{\sigma (t-1)}{L-\sigma}\|\mathbf{P}^k-\mathbf{P}^*\|^2 \}
    \\
     =& \frac{L-\sigma}{2}\{\frac{L-\sigma t}{L-\sigma}\|(\mathbf{P}^k-\mathbf{P}^*) +
    \frac{L-\sigma}{L-\sigma t}t(\mathbf{A}^k-\mathbf{P}^k)\|^2  \\
   & - \frac{\sigma (t-1)}{L-\sigma t}\|t(\mathbf{A}^k-\mathbf{P}^k)\|^2 \}
    \\
     \leq & \frac{L-\sigma t}{2}\|\mathbf{P}^k - \mathbf{P}^* + \frac{L-\sigma}{L-\sigma t} t(\mathbf{A}^k - \mathbf{P}^k)\|^2
\end{aligned}
\label{eq:t6}
\end{equation}
Therefore we have the following inequality according to Eq.(\ref{eq:t5}):
\begin{equation}
\begin{aligned}
    & t(t-1)V_k + \frac{L-\sigma t}{2}\|\mathbf{P}^k - \mathbf{P}^* + \frac{L-\sigma}{L-\sigma t} t(\mathbf{A}^k - \mathbf{P}^k)\|^2
    \\
    & \geq t^2V_{k+1} + \frac{L}{2}\|t\mathbf{P}^{k+1}- (\mathbf{P}^*+(t-1)\mathbf{P}^k)\|^2.
\end{aligned}
\label{eq:t7}
\end{equation}
With the update rule $\mathbf{A}^{k} = \mathbf{P}^k + \frac{\sqrt{c} - 1}{\sqrt{c}+1}(\mathbf{P}^k - \mathbf{P}^{k-1})$ in Algorithm \ref{alg:3} and $t = \sqrt{c}$, 
\begin{equation}
\begin{aligned}
  & \mathbf{P}^k - \mathbf{P}^* + \frac{L-\sigma}{L-\sigma t} t(\mathbf{A}^k - \mathbf{P}^k)
  \\
  = &\mathbf{P}^k - \mathbf{P}^* + \frac{L-\sigma}{L-\sigma t}\frac{t(t-1)}{t+1}(\mathbf{P}^k - \mathbf{P}^{k-1})
  \\
  = &t\mathbf{P}^k-(\mathbf{P}^*+(t-1)\mathbf{P}^{k-1}).
\end{aligned}
\label{eq:t8}
\end{equation}
Since $L = \sigma t^2$, based on Eq.(\ref{eq:t7}) and Eq.(\ref{eq:t8}), divide both sides of the inequality by $t^2$, we have
\begin{equation}
\begin{aligned}
& V_{k+1} + \frac{\sigma}{2}\|t\mathbf{P}^{k+1}- (\mathbf{P}^*+(t-1)\mathbf{P}^k)\|^2 \\
     \leq & (1-\frac{1}{t})(V_k + \frac{\sigma}{2}\|t\mathbf{P}^k - (\mathbf{P}^* +  (t-1)\mathbf{P}^{k-1})\|^2).
\end{aligned}
\label{eq:t9}
\end{equation}
For $k =0$, with the initialization setting $\mathbf{A}^0 = \mathbf{P}^0$,
\begin{equation}
\begin{aligned}
  \mathbf{P}^0 - \mathbf{P}^* + \frac{L-\sigma}{L-\sigma t} t(\mathbf{A}^0 - \mathbf{P}^0) = \mathbf{P}^0 - \mathbf{P}^*,
\end{aligned}
\label{eq:t10}
\end{equation}
based on Eq.(\ref{eq:t9}), naturally we have
\begin{equation}
\begin{aligned}
 &  V_{k} + \frac{\sigma}{2}\|t\mathbf{P}^{k}- (\mathbf{P}^*+(t-1)\mathbf{P}^{k-1})\|^2 \\
 \leq & (1-\frac{1}{t})^k(V_0 + \frac{\sigma}{2}\|\mathbf{P}^0 - \mathbf{P}^{*}\|^2).
\end{aligned}
\label{eq:t11}
\end{equation}
Thus we can get 
$V_{k} \leq (1-\frac{1}{t})^k(V_0 + \frac{\sigma}{2}\|\mathbf{P}^0 - \mathbf{P}^{*}\|^2)$. The objective in Eq.(\ref{eq:obj}) is not guaranteed to be monotonically non-increasing due to the existence of the $\frac{\sigma}{2}\|t\mathbf{P}^k - (\mathbf{P}^* +  (t-1)\mathbf{P}^{k-1})\|^2$ term, but in general we can expect it can achieve optimal solution with a linear convergence rate. Also, one can see that when the problem is well-conditioned, it converges faster, otherwise, it can be rather slow.
\end{proof}

\section{Experimental Results}\label{sec:exp}
\subsection{Experiment Setup}

 \begin{table*}[ht]
	\begin{center}
		\caption{Regression tasks: generalization performance measures over ten runs}
		\label{tab:1}
		\scalebox{1}{
		\begin{tabular}{*{7}{c}}
			\toprule
			\multicolumn{1}{c}{\textbf{Metric}}  &
			\multicolumn{1}{c}{\textbf{Dataset}}  &
			\multicolumn{1}{c}{\textbf{FedEM}}  &
			\multicolumn{1}{c}{\textbf{DMTL}} &
			\multicolumn{1}{c}{\textbf{MTFL}}  &
			\multicolumn{1}{c}{\textbf{MMTFL}}  &
			\multicolumn{1}{c}{\textbf{OURS-NATURAL}}\\
			\midrule
			\multirow{2}{*}{VE ($\%$)}& School & 38.7$\pm$3.2 & 28.8$\pm$5.6 & 29.7$\pm$2.1 
        & 31.5$\pm$4.2 &\textbf{39.8$\pm$2.2}\\
			 & Sarcos & 51.2$\pm$12.1 & 42.6$\pm$7.3 & 49.2$\pm$5.1  & 49.9$\pm$2.7 & 51.8$\pm$6.6\\
			\midrule
			\multirow{2}{*}{nMSE ($\%$)}& School & 61.2$\pm$0.9 & 75.1$\pm$0.8 & 72.9$\pm$1.1 & 68.7$\pm$3.1  & \textbf{60.2$\pm$0.5}\\
		    & Sarcos & 15.7$\pm$3.1 & 20.9$\pm$0.8 & 19.1$\pm$2.7 & 17.1$\pm$2.2 &14.9$\pm$0.8\\
			 \toprule
			\multicolumn{1}{c}{\textbf{Metric}}  &
			\multicolumn{1}{c}{\textbf{Dataset}}  &
			\multicolumn{1}{c}{\textbf{MTRL}} &
			\multicolumn{1}{c}{\textbf{RMTL}} &
			\multicolumn{1}{c}{\textbf{CLMT}}  &
            \multicolumn{1}{c}{\textbf{MKMTL}}  &
			\multicolumn{1}{c}{\textbf{OURS-ART}}\\
			\midrule
			\multirow{2}{*}{VE ($\%$)}& School & 29.9$\pm$2.0 & 33.6$\pm$5.7 & 37.9$\pm$2.1 & 37.9$\pm$1.9 & \textbf{39.8$\pm$2.5}\\
			 & Sarcos &  42.5$\pm$8.2 & 49.9$\pm$7.2 & 50.1$\pm$9.9  & 50.1$\pm$1.7 & \textbf{51.9$\pm$6.2}\\
			\midrule
			\multirow{2}{*}{nMSE ($\%$)}& School & 73.1$\pm$0.9 & 68.9$\pm$2.7 & 63.7$\pm$0.7 & 62.7$\pm$0.9  & \textbf{60.2$\pm$0.3}\\
		    & Sarcos & 17.9$\pm$0.7 & 15.6$\pm$0.3 & 16.2$\pm$0.6  & 15.7$\pm$0.5 & \textbf{14.7$\pm$0.9}\\
			 \bottomrule
		\end{tabular}}
	\end{center}
\end{table*}

 \begin{table*}[ht]
	\begin{center}
		\caption{Classification tasks: generalization performance measures over ten runs}
		\label{tab:2}
		\scalebox{1}{
		\begin{tabular}{*{7}{c}}
			\toprule
			\multicolumn{1}{c}{\textbf{Metric}}  &
			\multicolumn{1}{c}{\textbf{Dataset}}  &
			\multicolumn{1}{c}{\textbf{FedEM}}  &
			\multicolumn{1}{c}{\textbf{DMTL}} &
			\multicolumn{1}{c}{\textbf{MTFL}}  &
			\multicolumn{1}{c}{\textbf{MMTFL}}  &
			\multicolumn{1}{c}{\textbf{OURS-NATURAL}}\\
	
			\midrule
			\multirow{4}{*}{AUC ($\%$)}& Yale & 96.9$\pm$2.7 & 95.7$\pm$3.2 & 86.8$\pm$1.6  & 92.7$\pm$1.1   & \textbf{97.7$\pm$1.7}\\
			& MNIST & \textbf{98.1$\pm$3.2} & 90.9$\pm$3.1 & 91.2$\pm$1.9   & 91.5$\pm$2.3   & 96.5$\pm$1.2\\
		     & Letter & 63.2$\pm$2.9 & 61.9$\pm$1.8 & 62.1$\pm$2.2    & 61.8$\pm$1.8  & 63.5$\pm$1.1\\
	     & ORL & 81.3$\pm$5.8 & 72.9$\pm$3.7 & 77.2$\pm$5.2  & 77.9$\pm$3.2   & 82.5$\pm$2.2\\
			\toprule
			\multicolumn{1}{c}{\textbf{Metric}}  &
			\multicolumn{1}{c}{\textbf{Dataset}}  &
			\multicolumn{1}{c}{\textbf{MTRL}} &
			\multicolumn{1}{c}{\textbf{RMTL}} &
			\multicolumn{1}{c}{\textbf{CLMT}}  &
            \multicolumn{1}{c}{\textbf{MKMTL}}  &
			\multicolumn{1}{c}{\textbf{OURS-ART}}\\
	
			\midrule
			\multirow{4}{*}{AUC ($\%$)}& Yale  & 96.1$\pm$3.5 & 97.2$\pm$1.9 & 96.7$\pm$2.3   & 95.7$\pm$1.8  & \textbf{97.7$\pm$1.7}\\
			& MNIST  & 93.7$\pm$7.1 & 92.5$\pm$3.1 & 94.7$\pm$3.8   & 93.2$\pm$5.2  & 97.6$\pm$1.2\\
		     & Letter  & 60.3$\pm$2.1 & 62.7$\pm$3.6 & 61.5$\pm$7.2   & 60.9$\pm$7.1  & \textbf{65.5$\pm$1.3}\\
	     & ORL  & 77.6$\pm$3.9 & 80.2$\pm$9.1 & 78.9$\pm$6.3  & 78.8$\pm$3.2  & \textbf{84.3$\pm$1.8}\\
			\bottomrule
		\end{tabular}}
	\end{center}
\end{table*}

	

\begin{figure*}
\centering     
\subfigure[Yale]{\label{fig:11}\includegraphics[width=55mm]{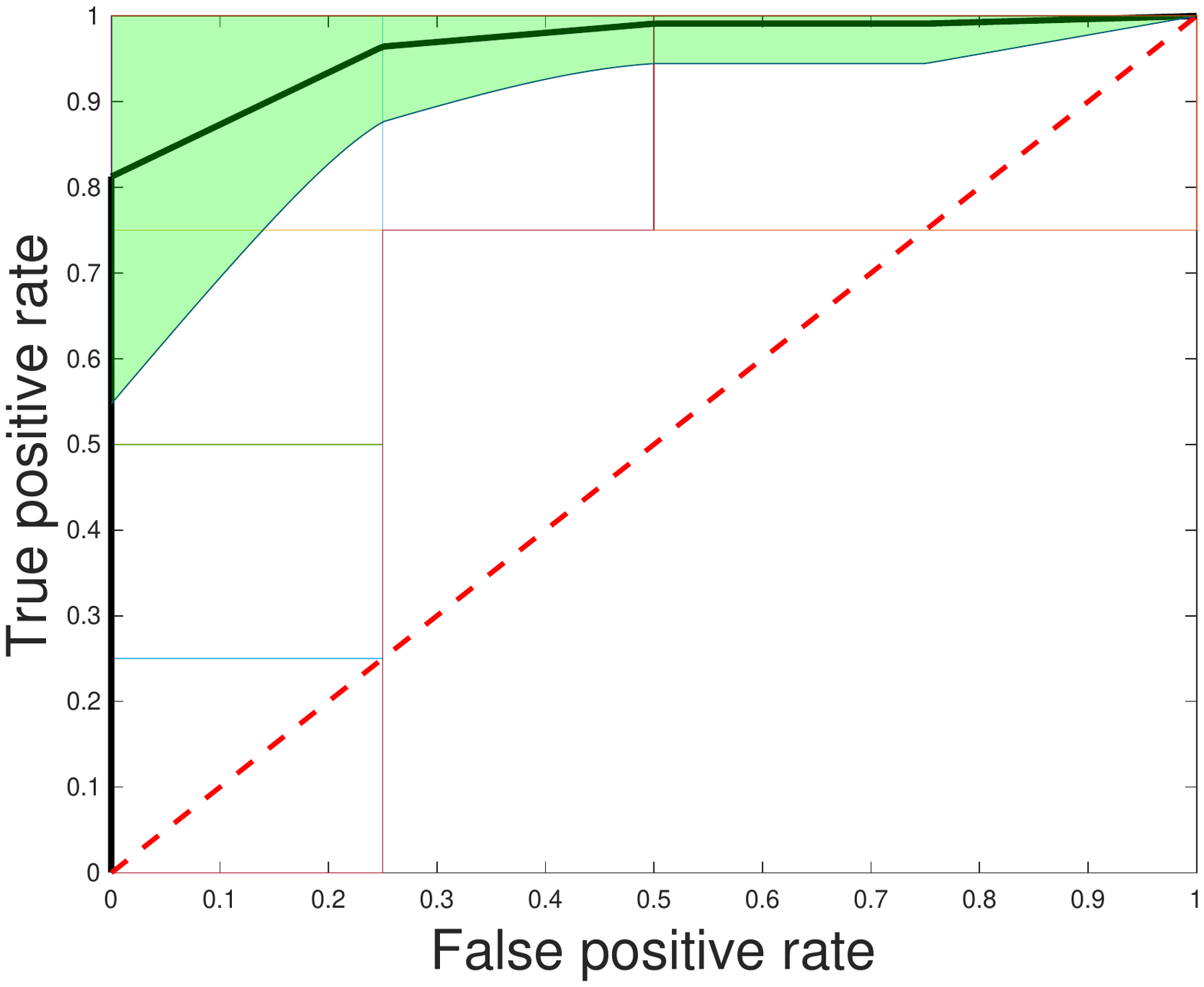}}
\subfigure[MNIST]{\label{fig:12}\includegraphics[width=55mm]{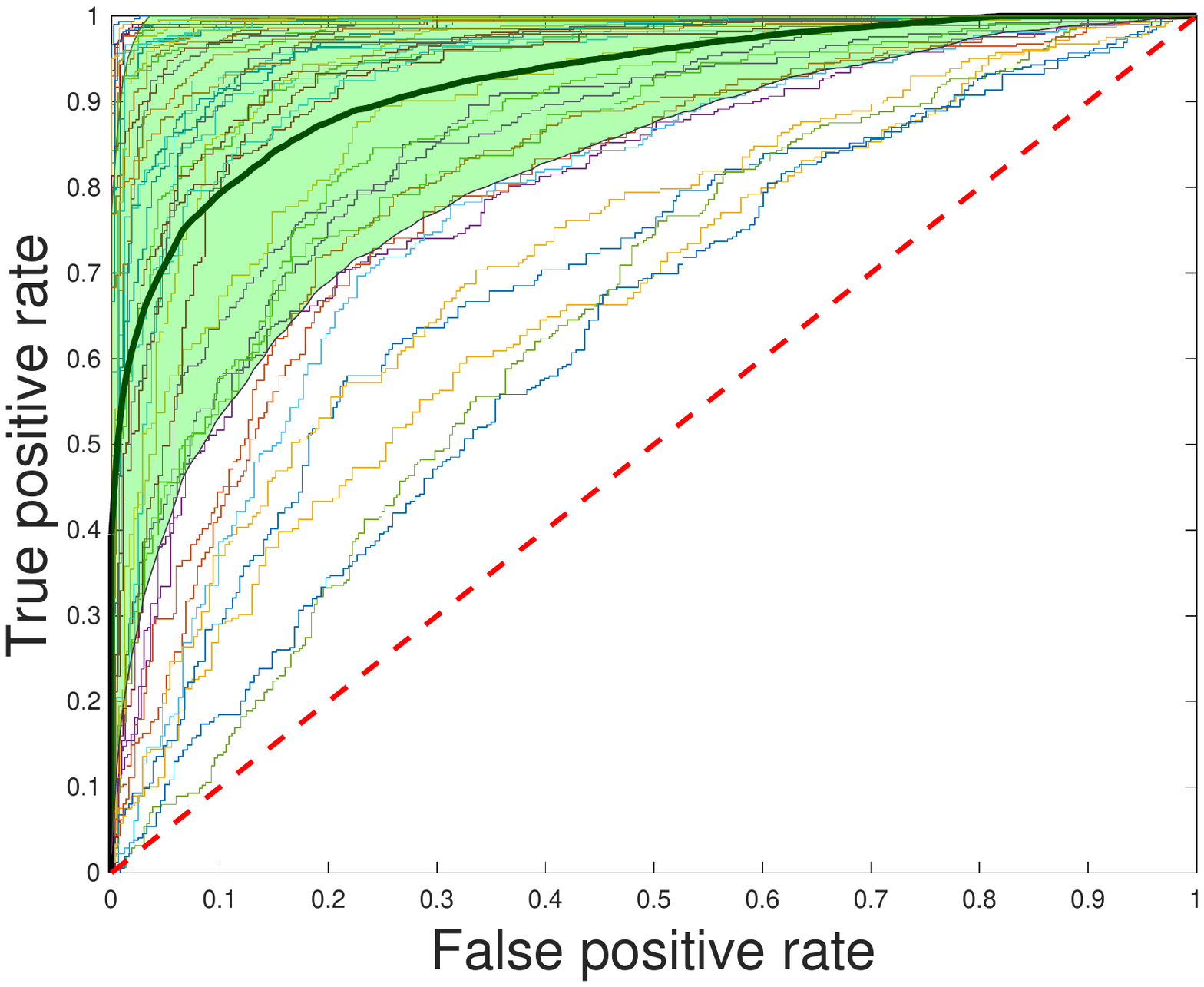}}
\subfigure[ORL]{\label{fig:13}\includegraphics[width=55mm]{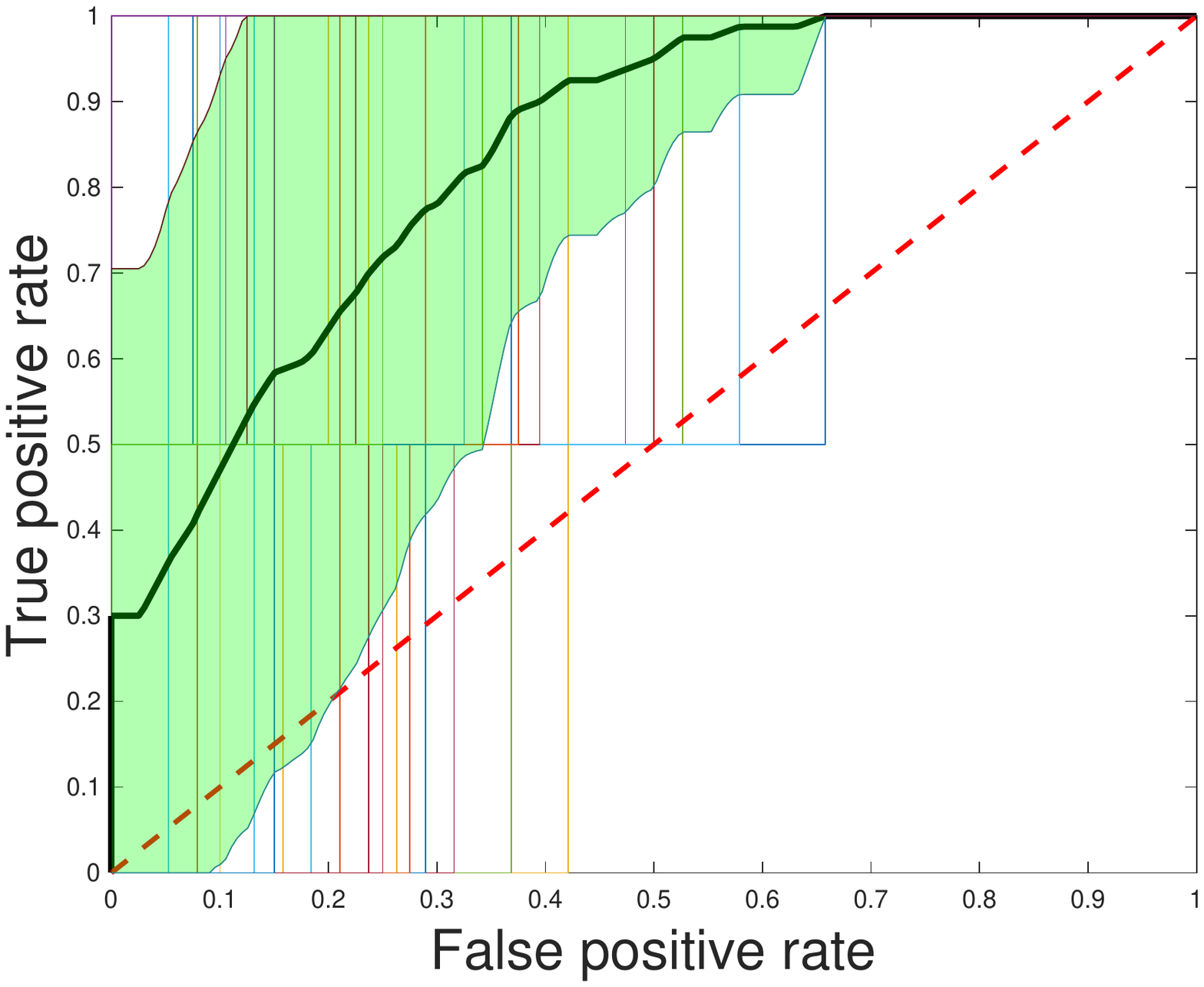}}
\caption{The ROC curve for Yale, MNIST, ORL dataset with tuned parameters.}
\label{fig:roc}
\end{figure*}

\begin{figure*}[ht]
	\centering 
\subfigure[AUC, Yale]{\label{fig:21}\includegraphics[width=41mm]{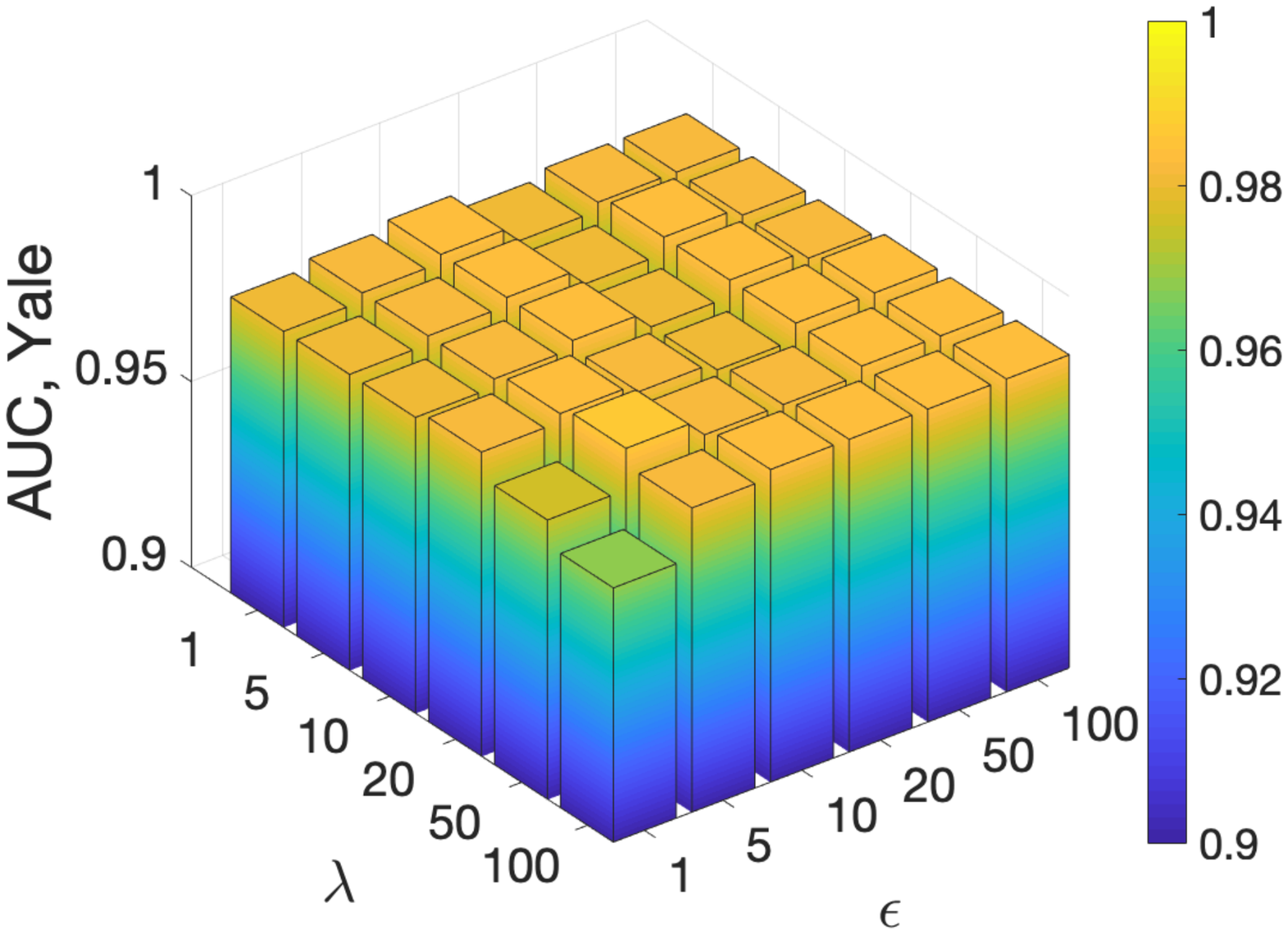}}
\subfigure[AUC, ORL]{\label{fig:22}\includegraphics[width=41mm]{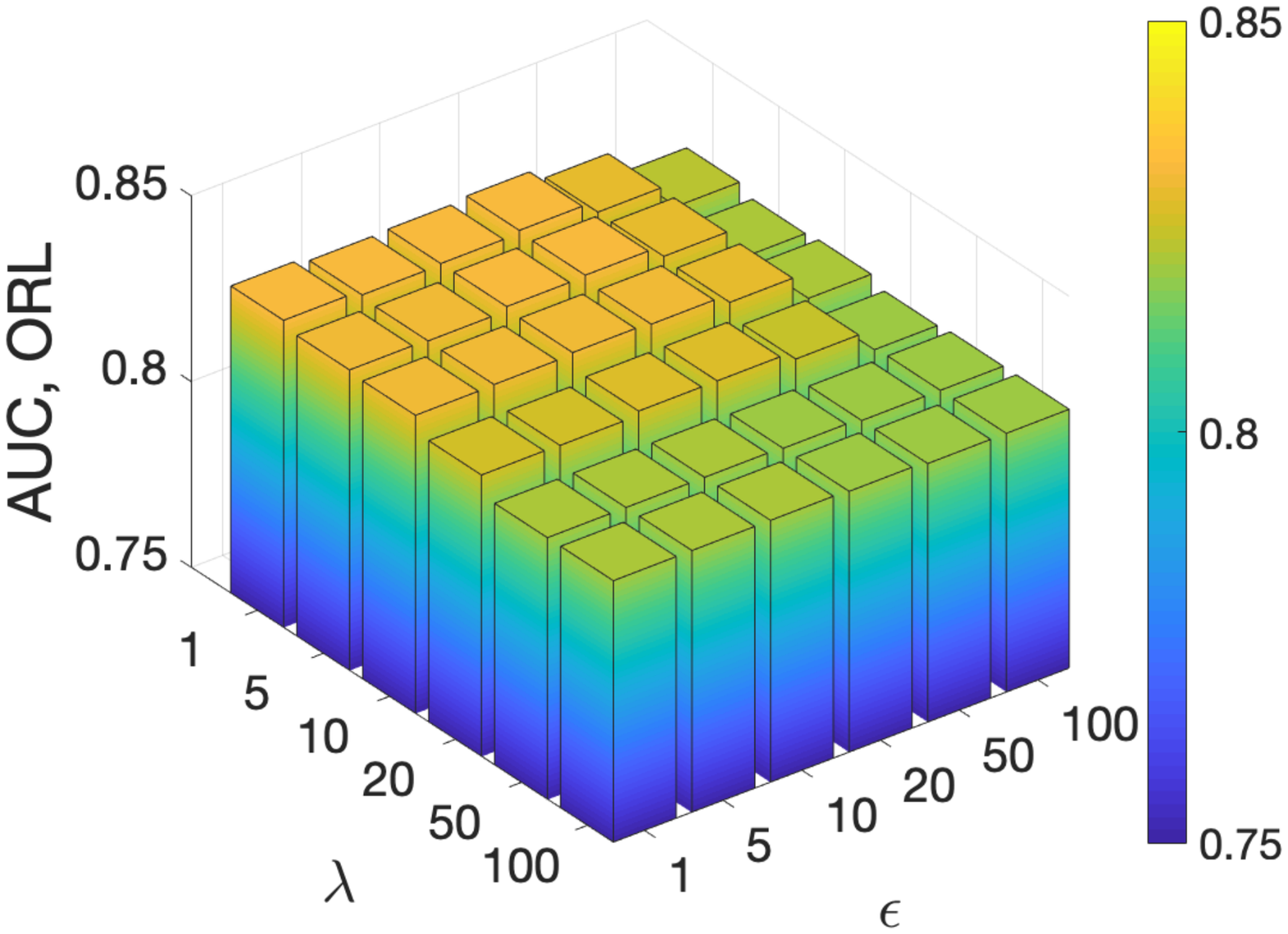}}
\subfigure[VE, School]{\label{fig:23}\includegraphics[width=41mm]{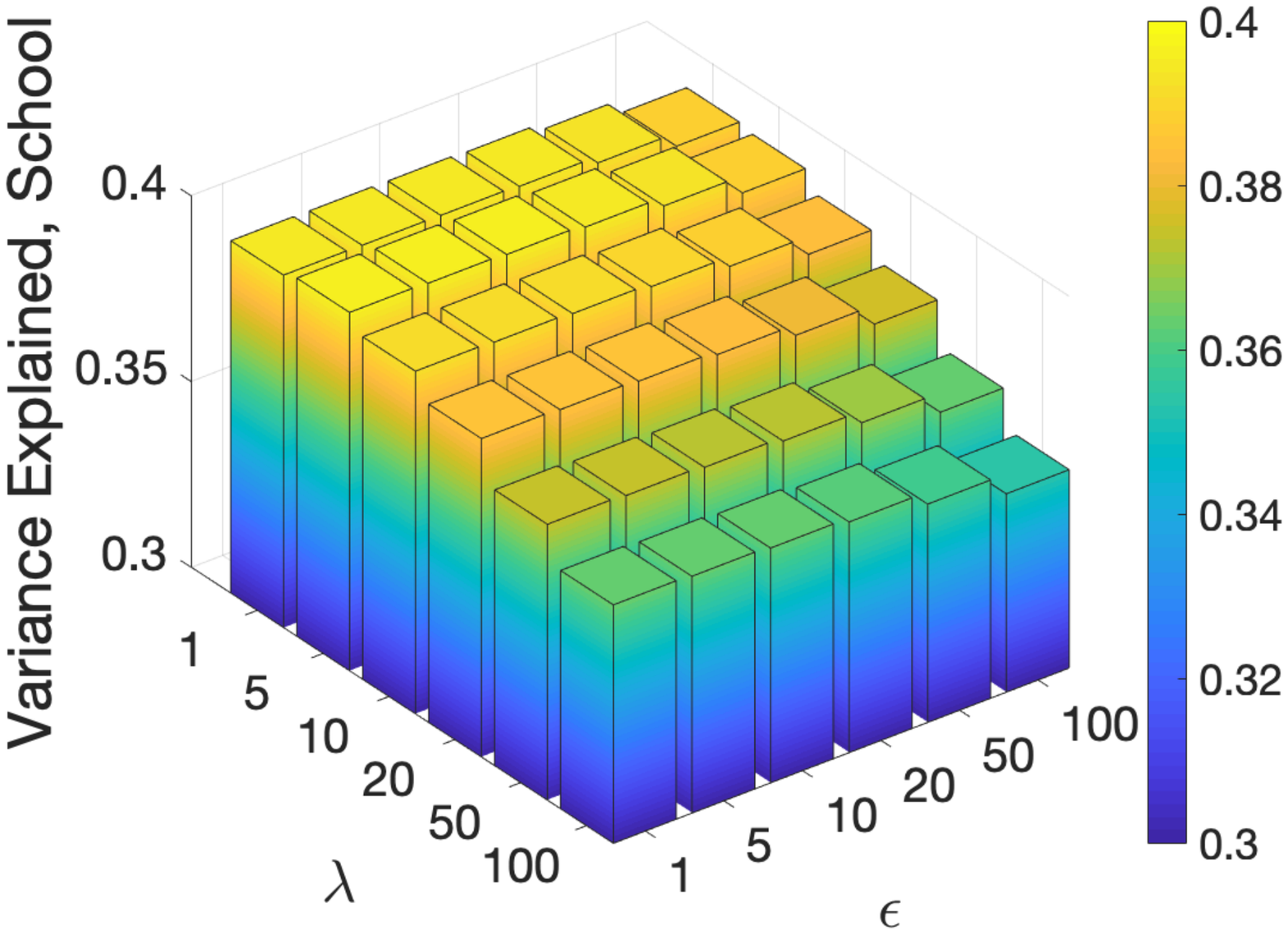}}
\subfigure[nMSE, School]{\label{fig:24}\includegraphics[width=41mm]{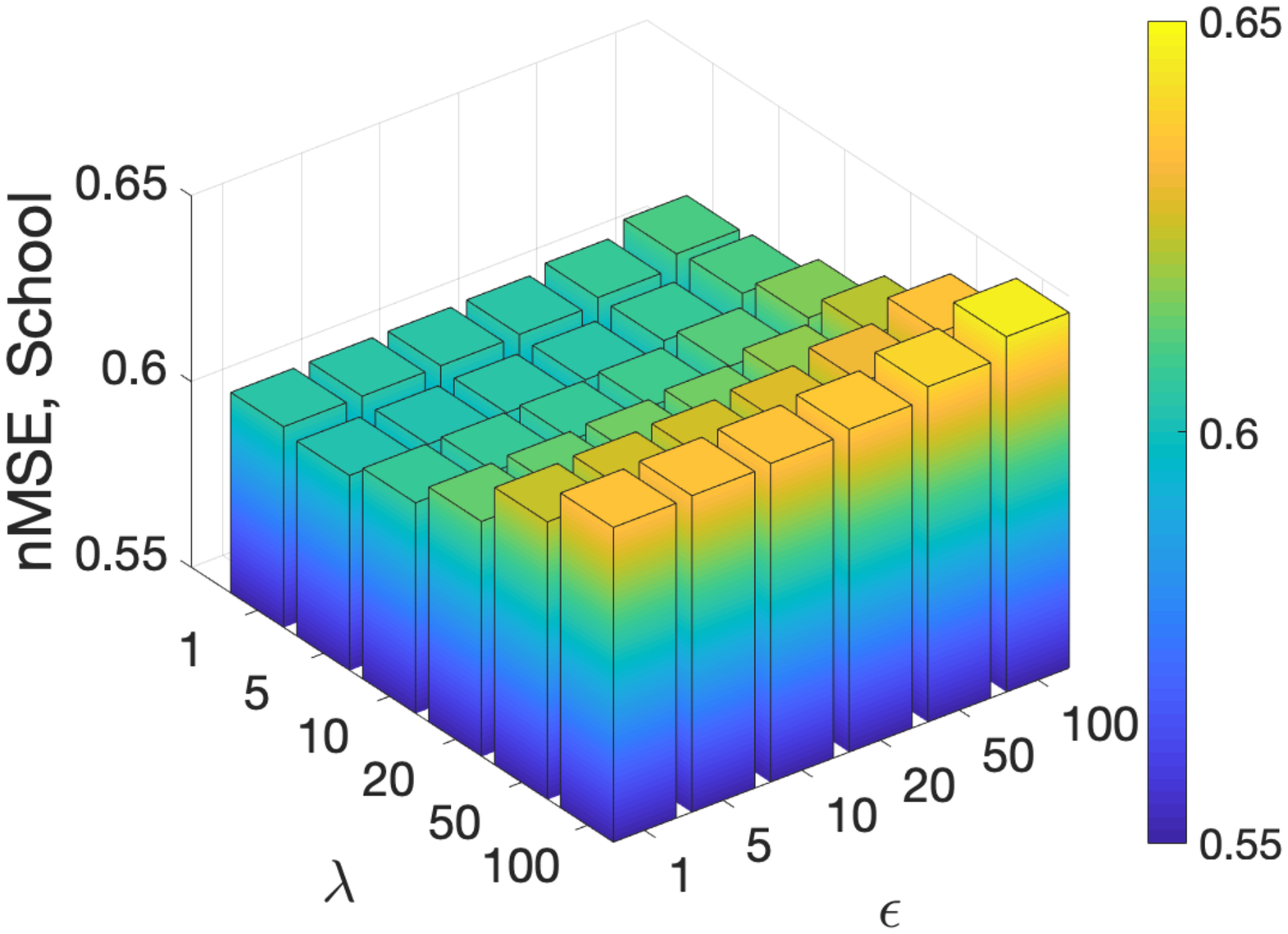}}
	\caption{Ablation study -- the influence of regularization parameters on learning performance.}
	\label{fig:cubic}
\end{figure*}
We compare the experimental results of the following MTL algorithms with our method:
FedEM~\cite{marfoq2021federated},
DMTL~\cite{jalali2010dirty},
MTFL~\cite{jawanpuria2012convex},
MMTFL~\cite{wang2014multiplicative},
MTRL~\cite{zhang2012convex},
RMTL~\cite{chen2011integrating},
CLMT~\cite{ciliberto2015convex},
MKMTL~\cite{liu2018linearized}.
The empirical studies are conducted on the following benchmark multi-task regression and classification datasets:

\textbf{School}~\cite{cortez2008using}: there are exam scores of 15362 students from 139 schools in the dataset, each student is described with 28 attributes. Thus there are 139 related tasks, 
each sample has 28 features along with 1 output. We aim to perform multi-task regression to predict exam scores. 

\textbf{Sarcos}~\cite{vijayakumar2005incremental}: it is collected for an inverse dynamics prediction problem for a seven degrees-of-freedom robot arm. The number of related tasks is 7, and there are 21 features for each sample. Following the work in~\cite{zhang2009semi} we sample 2000 random samples for each task. 

\textbf{Yale}: it contains 165 images from 15 subjects, each image is scaled to 32 $\times$ 32 pixels. We use the first 8 subjects from it to construct related tasks, each task is defined as a binary classification problem of classifying two subjects, there are 28 binary classification tasks. 

\textbf{MNIST}~\cite{lecun-mnisthandwrittendigit-2010}: we use a subset from the MNIST dataset with 10000 samples of 10 handwritten digits. We cast the multi-task learning as 45 binary classification tasks to classify pairs of digits. 

\textbf{Letter}~\cite{Dua:2019}: it consists of handwritten letters from different writers, we construct 8 binary classification tasks from it to distinguish between pairs of letters. 

\textbf{ORL}~\cite{samariaproceedings}: there are 10 different images of 40 distinct subjects. What's different for the ORL dataset is we construct 40 one-vs-all multi-class classification tasks from it rather than one-vs-one binary classification.

During the experiment process, each dataset is randomly split into a training set and a test set. In all classification tasks, the data is split according to a rough 60$\%$-40$\%$ training-test split ratio, in the School regression task 20 random samples are used for training, and in the Sarcos regression task, 50 random samples are selected for training and the rest for testing. The prior knowledge matrix $\mathbf{D}$ can be initialized based on known prior, our common sense, and the correlation among features obtained with statistical methods. In the experiment, we have two methods to obtain the required $\mathbf{D}$: the first method is to utilize the natural correlation among features, we calculate the covariance matrix for all the features, and select the strongest ones as the information contained in $\mathbf{D}$, we call the prior knowledge matrix obtained in this way the $\mathbf{natural}$ $\mathbf{D}$; the second method is we create the strong correlation among features manually by appending features repeatedly on purpose, for example, we transform the original feature $(x_1,x_2,x_3)$ into $(x_1, x_2, x_3,x_1)$ to enhance the correlation among features (this may introduce multicollinearity, but since we only care about the predictive result, it should be fine), the matrix obtained is called as the $\mathbf{artificial}$ $\mathbf{D}$.
Parameters in all the methods are tuned using 5-fold cross-validation, and for each method, we stop the experiment when the objective change is $< 10^{-3}$. We use Matlab R2019a on a laptop with a 1.4 GHz QuadCore Intel Core i5 processor.

\subsection{Results}

Results of the experiment are presented in Table \ref{tab:1} and Table \ref{tab:2}, for regression tasks and classification tasks respectively, our method with a natural $\mathbf{D}$ is denoted as OURS-NATURAL, with an artificial $\mathbf{D}$ is denoted as OURS-ART. In OURS-ART, we manually repeat about 5\% features to construct $\mathbf{D}$. In the case of regression tasks, we report the variance explained (VE) and the normalized mean squared error (nMSE) following previous studies~\cite{chen2011integrating, jawanpuria2012convex}, whereas the receiver operating characteristic (ROC) curve and the area under the ROC curve (AUC) are employed as the classification performance measurements as used in previous studies~\cite{chen2011integrating, jawanpuria2012convex}. ROC curves show the performance of a binary classification model at all classification thresholds by plotting the true positive rate against the false positive rate. Higher explained variance and AUC indicate better performance, and the opposite for nMSE reported. Each experiment with one specific dataset is repeated 10 times and we report the averaged performance and the standard deviation, the best performances are in \textbf{bold}. 

In Table \ref{tab:1} and Table \ref{tab:2}, our method can achieve the best performance in most cases, while the FedEM method achieves better results with the MNIST dataset. The reason perhaps is that neural networks still have unique advantages when dealing with large-scale complex image datasets, and prior knowledge about features is hard to establish with complex image data. Also, in regression tasks, our method works well with both natural prior knowledge and artificial prior knowledge. In image classification tasks, the improvement in performance with artificial prior knowledge matrices is more significant than that with natural ones, the reason may be due to the fact that the natural correlations between features in regression tasks are stronger than those involved in image classification tasks. 

We present the ROC curves with a natural prior knowledge matrix for the classification tasks on Yale, MNIST, and ORL datasets in Figure \ref{fig:roc}, the slim colorful lines are the ROC curves for each task, and the weighted black line is the averaged ROC curve over all the tasks, and the green area represents the range between the mean $\pm$ standard deviation. While for each dataset there is a small number of tasks with performances that are not so perfect on some level, the overall performance is satisfactory with a pretty high AUC, and the mean $-$ standard deviation curve is almost always over the diagonal line, especially for the one-vs-one classification tasks on Yale and MNIST. 
We also provide Figure \ref{fig:cubic} to illustrate the influence of regularization parameters. With a super wide tuning range for each parameter, which is from 1 to 100, the effect of each parameter on the performance is not significant until the value reaches a threshold, there is a generous range for each parameter to be able to provide stable and great performance.
We can roughly draw the conclusion that for classification tasks, no matter whether it’s one-vs-one binary classification tasks or one-vs-all binary classification tasks, the values of regularization parameters have a marginal influence on the results as long as the value is within a reasonable range. While in a regression situation the performance is much more sensitive to the value of parameters, especially to the group Lasso penalty parameter, which is in accordance with common sense that under-fitting happens with large regularization term parameters. 

\section{Conclusion}
We propose a convex formulation of multi-task learning problem utilizing prior information. A novel optimization algorithm to solve the formulated problem with a linear convergence rate is proposed with theoretical guarantee instead of sub-linear rate of the counterparts. Results on benchmark datasets with both regression and classification tasks demonstrate the effectiveness and advantages of our proposed multi-task learning formulation and algorithm. Identifying outlier tasks using the framework will be left to our future work. 

\bibliography{sdm}
\bibliographystyle{plain}
\end{document}